\newtheorem{theorem}{Theorem}
\newtheorem{lemma}{Lemma}
\newtheorem{assumption}{Assumption}
\theoremstyle{remark} 
\newtheorem{remark}{Remark}
\definecolor{myred}{RGB}{192,0,0}
\definecolor{myblue}{RGB}{0,0,128}
\def\R{\mathbb{R}}
\def\E{\mathbb{E}}
\def\sign{\mathrm{sign}}
\newcommand{\bd}{\boldsymbol{d}}
\newcommand{\g}{\boldsymbol{g}}
\newcommand{\m}{\boldsymbol{m}}
\newcommand{\bu}{\boldsymbol{u}}
\newcommand{\bv}{\boldsymbol{v}}
\newcommand{\x}{\boldsymbol{x}}
\newcommand{\y}{\boldsymbol{y}}
\newcommand{\z}{\boldsymbol{z}}
\def\cD{\mathcal{D}}
\def\cS{\mathcal{S}}
\newcommand{\tnab}{\tilde{\nabla}}
\newcommand{\bxi}{\boldsymbol{\xi}}
\newcommand{\beps}{\boldsymbol{\epsilon}}
\newcommand{\bdta}{\boldsymbol{\delta}}
\newcommand{\zero}{\mathbf{0}}
\newcommand{\inpd}[1]{\langle #1 \rangle}
\newcommand{\minimize}{\mathop{\mathrm{minimize}}}
\def\1{\bm{1}}
\def\vd{{\bm{d}}}
\def\vu{{\bm{u}}}
\def\vv{{\bm{v}}}
\def\vx{{\bm{x}}}
\def\vz{{\bm{z}}}
\DeclareMathAlphabet{\mathsfit}{\encodingdefault}{\sfdefault}{m}{sl}
\SetMathAlphabet{\mathsfit}{bold}{\encodingdefault}{\sfdefault}{bx}{n}
\title{Distributed Sign Momentum with Local Steps for Training Transformers}
\author{Shuhua Yu\thanks{Carnegie Mellon University. Emails: \texttt{\{shuhuay,soummyak\}@andrew.cmu.edu}. SY's work was partially done during an internship at ByteDance Inc.} 
\and Ding Zhou\thanks{ByteDance Inc. Emails: \texttt{\{ding.zhou,cong.
xie,an.xu,zhangzhi.joshua,liuxin.ai\}@bytedance.com}} 
\and Cong Xie\footnotemark[2]
\and An Xu\footnotemark[2]
\and Zhi Zhang\footnotemark[2]
\and Xin Liu\footnotemark[2]
\and Soummya Kar\footnotemark[1]
}
\date{    \footnotemark[1]~Carnegie Mellon University\\[0.2ex]%
    \footnotemark[2]~ByteDance Inc. \\[2ex]%
    November 2024; revised March 2025} 
\begin{document}

\maketitle  

\begin{abstract}
Pre-training Transformer models is resource-intensive, and recent studies have shown that sign momentum is an efficient technique for training large-scale deep learning models, particularly Transformers.  However, its application in distributed training remains underexplored. This paper investigates a novel communication-efficient distributed sign momentum method with multiple local steps, to cope with the scenarios where communicating at every step is prohibitive. Our proposed method allows for a broad class of base optimizers for local steps, and uses sign momentum in the global step, where momentum is generated from differences accumulated during local steps. For generic base optimizers, by approximating the sign operator with a randomized version that acts as a continuous analog in expectation, we present a general convergence analysis, which specializes to an $O(1/\sqrt{T})$ rate for a particular instance. When local step is stochastic gradient descent, we show an optimal $O(1/T^{1/4})$ rate in terms of $\ell_1$ gradient norm for nonconvex smooth cost functions. We extensively evaluate our method on the pre-training of various sized GPT-2 models from scratch, and the empirical results show significant improvement compared to other distributed methods with multiple local steps. 
\end{abstract}

\section{Introduction}
In this paper, we tackle the distributed optimization problem across $n$ workers:
\begin{align}
\label{eq:dist-opt}
\minimize_{\x \in \R^d} f(\x) := \frac{1}{n}\sum_{i = 1}^n \E_{\bxi^{(i)} \sim \cD_i} f_i (\x, \bxi^{(i)}),
\end{align}
where $f_i$ represents the cost function of a supervised learning model, \(\x\) denotes the model parameters, and \( \bxi^{(i)} \) refers to the data samples drawn from data distribution $\cD_i$ on worker $i \in [n]$. In distributed training tasks, $\cD_i$ can be a finite collection of samples allocated on worker $i$.

The problem formulation \eqref{eq:dist-opt} serves as a natural abstraction for distributed training \citep{lian2017can, mcmahan2017communication}. A common approach to solving \eqref{eq:dist-opt} involves workers first computing mini-batch gradients of $f_i$, followed by an all-reduce step to aggregate the local mini-batch gradients and obtain the global average. A global optimization step, such as Stochastic Gradient Descent (SGD) \citep{robbins1951stochastic}, Momentum \citep{polyak1964some}, Nesterov’s Accelerated Gradient (NAG) \citep{nesterov1983method}, Adam \citep{kingma2014adam} and its weight decay variant AdamW \citep{loshchilov2017decoupled}, or more recently Lion \citep{chen2024symbolic}, is then applied. This method is well suited for distributed learning scenarios where communication is inexpensive. 

However, in large-scale distributed training across GPU clusters, synchronized communication during the all-reduce step can become a bottleneck due to stragglers \citep{dean2012large} or slow inter-node and inter-cluster communication, making per-step communication impractical. Several approaches have been proposed to mitigate this communication overhead. Decentralized methods, such as gossiping SGD \citep{jin2016scale} and consensus-based distributed SGD \citep{jiang2017collaborative}, use approximate mini-batch gradient averaging instead of exact averaging to reduce idle time. Local step methods, like local SGD \citep{lindon, gorbunov2021local}, local momentum (PR-SGD-Momentum) \citep{yu2019linear} and local Adam \citep{lumaximizing}, reduce communication frequency to save on bandwidth. Additionally, some local step methods further optimize performance by overlapping communication and computation \citep{wang2019slowmo, sun2024co2}, utilizing stale synchronized model parameters for local steps to reduce the waiting time for synchronizing the latest local models. All these approaches, although reduce communication and run faster, add perturbations to the optimization process. These perturbations can lead to decreased performance in terms of training and validation losses under the same amount of computation \citep{wang2020tackling}. In this work, we focus on designing local step methods with excellent communication-performance tradeoffs. 


Several methods have been proposed to enhance the performance of local steps techniques. SCAFFOLD \citep{karimireddy2020scaffold} introduces a control variate that corrects the local SGD direction using global information, eliminating the dependency on local gradient dissimilarity in the convergence guarantees. One of the most relevant methods, BMUF \citep{chen2016scalable}, combines local steps with a global momentum step to accelerate distributed training, without the need for additional communication (which can also be a bottleneck in high-dimensional problem), such as transmitting the control variate in SCAFFOLD or the local momentum in PR-SGD-Momentum \citep{yu2019linear}. SlowMo \citep{wang2019slowmo} extends BMUF by providing a general framework that operates on local base optimizers, such as local SGD and SGP \citep{assran2019stochastic}, and periodically applies a global momentum step after running the local optimizer for several iterations. 


Beyond local step methods, momentum has long been recognized as a key technique for improving deep learning training, enhancing both optimization and generalization \citep{sutskever2013importance}. More recently, \citet{kunstner2023noise} suggest that sign momentum may be the key factor behind Adam’s significant advantage over SGD in training Transformers \citep{vaswani2017attention}.  \citet{chen2024symbolic} introduced the Lion (Evolved Sign Momentum) optimization method, discovered through program search, which outperforms the widely used Adam \citep{kingma2014adam} in various vision and autoregressive language modeling tasks. Furthermore, \citet{kunstner2023noise}, through experiments with different batch sizes, suggest that Adam’s significant performance advantage over SGD likely stems from its similarity to sign descent with momentum, particularly in Transformers used for language models. 

Inspired by the effectiveness of the sign momentum method in training large neural networks, particularly Transformers, and the success of global momentum steps in local step methods, a natural question is:
\begin{center}
\textit{Can we extend the sign momentum method to distributed training of Transformers with local steps?}
\end{center}
\subsection{Contributions} 
In this work, we answer this question affirmatively by proposing a framework for distributed sign momentum with local steps, supported by both empirical evaluations and theoretical justifications. Our contributions are as follows: 
\begin{enumerate}[label=(\alph*)]
    \item We introduce a new framework for distributed sign momentum with local steps. This framework is versatile, allowing any proper base optimizers for local steps, while using differences accumulated from these updates to adjust the momentum in the global sign momentum step. Details are provided in Algorithm \ref{alg:dist-sign-mom}.
    \item For generic base optimizer, we provide a general theoretical analysis of the proposed framework by approximating the sign operator with a randomized continuous analog in expectation. When specializing the base optimizer as SGD, we obtain an in-expectation convergence rate of $O\big(1/\sqrt{T}\big)$ in terms of squared Euclidean norms, where $T$ is the total global steps.
    \item When the base optimizer is SGD and the actual sign operator is used, we show an optimal in-expectation rate of $O(1/T^{1/4})$ measured in $\ell_1$ gradient norms, matching the best-known rate of centralized sign momentum method. 
    \item We extensively evaluate the performance of Algorithm \ref{alg:dist-sign-mom} in pre-training GPT-2 models of various sizes from scratch, demonstrating consistent improvements over other distributed methods with multiple local steps and highlighting its strength in large-scale scenarios where communicating at every step is prohibitively costly. 
\end{enumerate} 

\subsection{Related work} 
\textit{Communication compression}. In distributed optimization, there has been extensive research on compressing communication between workers to reduce overhead. This includes methods such as 1-bit sign SGD \citep{seide20141, bernstein2018signsgd}, gradient difference compression \citep{gorbunov2021marina}, compression with error-feedback \citep{karimireddy2019error}, and compression with error reset \citep{xie2020cser}, among others. Although this work focuses on local step methods without communication compression, several 1-bit compression (sign-based) methods, such as SIGNUM \citep{bernsteinsignsgd} and Major Vote signSGD \citep{sun2023momentum}, apply (random) sign operations to local momentum and use majority voting for local signs. In contrast, Algorithm \ref{alg:dist-sign-mom} employs full-precision communication during the global synchronization step, with the sign taken after aggregation. 

\textit{signSGD and its momentum variants}. The convergence of signSGD and its momentum variants has been studied for nonconvex and smooth functions. For signSGD, defined as:
\begin{align}
\label{eq:signsgd}
    \x_{t+1} = \x_t - \eta_t \text{sign}(\nabla f(\x_t, \bxi_t)), 
\end{align}
an $O(T^{-1/4})$ convergence rate  has been established under an increasing batch size \citep{bernstein2018signsgd}, or under unimodal and symmetric stochastic gradients \citep{bernsteinsignsgd}. For signSGD with a momentum buffer \(\m_t\), given by: 
\begin{align}
\begin{split}
\label{eq:signmom}
\m_{t +1} = \beta \m_t + (1 - \beta)\nabla f(\x_t, \bxi_t),  \\
\x_{t +1} = \x_t - \eta_t \text{sign}( \m_{t +1}), 
\end{split}
\end{align}
its $O(1/T^{-1/4})$ convergence rate is guaranteed under a decreasing step size and increasing batch size, as shown by \citet{bernstein2018signsgd}. \citet{sun2023momentum} improve the previous results by showing the same convergence rate under a \textit{constant} batch size independent of $T$. Additionally, \citet{karimireddy2019error} shows that signSGD with error-feedback, another momentum variant, also achieves an $O(T^{-1/4})$ convergence rate, albeit under a more stringent bounded stochastic gradient condition. 

\textit{Distributed sign momentum.} The idea of distributed sign momentum is also studied in works including \citet{liu2024communication} and \citet{tang2024fedlion}. \citet{tang2024fedlion} uses Lion for multiple local steps and average model and momentum periodically, and \citet{liu2024communication} computes the momentum term in Lion locally for once and uses global aggregation for model updates. Conversely, our proposed framework supports multiple local steps using any suitable base optimizer to generate local differences, which are then leveraged for sign momentum in the global step.

\section{Distributed sign momentum} 
We introduce a general framework, outlined in Algorithm \ref{alg:dist-sign-mom}, that can incorporate any off-the-shelf base optimizer within the inner loop. Each worker $i \in [n]$ maintains a local model copy $\x_{t, k}^{(i)}$ for the outer iteration $t$ and inner iteration $k$, denoted as $(t, k)$. Specifically, at $t$-th outer iteration, each worker $i \in [n]$ performs $\tau$ local model updates: 
\begin{align} 
\label{eq:local-steps}
    \x_{t, k+ 1}^{(i)} = \x_{t, k}^{(i)} - \gamma_t \vd^{(i)}_{t, k}, k = 0, \ldots, \tau -1,
\end{align}
where $\gamma_t$ is the local learning rate (LR) and $\vd^{(i)}_{t, k}$ is the update direction of local base optimizer. We allow for any base optimizers, such as SGD (with or without momentum), AdamW, and others. For example, if the base optimizer is mini-batch SGD, then
\begin{align}
\label{eq:base-sgd}
\vd_{t, k}^{(i)} = \nabla f_i(\x_{t, k}^{(i)}, \bxi_{t, k}^{(i)}), 
\end{align}
where \(\bxi_{t, k}^{(i)}\) is a sample drawn from \(\cD_i\). As in \citet{wang2019slowmo}, base optimizers can also involve communication steps, such as SGP \citep{assran2019stochastic}, and momentum steps. We denote the final update direction applied to worker $i$  at iteration  $(t, k)$  as \( \vd_{t,k}^{(i)} \). 

After $\tau$ local steps using the base optimizer, the workers communicate to perform a global sign momentum step. Two global buffers are maintained: the model buffer \(\x_{t, 0}\) and the momentum buffer \(\m_{0}\). All workers are assumed to be initialized with \(\x_{0, 0}\) and \(\m_{0} = \zero\). First, an all-reduce operation is performed to compute the average of the local models, i.e., \(\x_{t, \tau} = (1/n)\sum_{i=1}^n \x_{t, \tau}^{(i)}\). The momentum buffer is then used to execute the global sign momentum step (line 9), for some $\beta_1 \in [0, 1]$, 
\begin{align}
    \vu_{t + 1} & = \beta_1 \m_t + \frac{1- \beta_1}{\gamma_t} (\x_{t, 0} - \x_{t, \tau}), \label{eq:ut-update} \\
    \x_{t + 1, 0} & = \x_{t, 0} - \eta \gamma_t \big( \text{sign}(\vu_{t +1}) + \lambda \x_{t, 0} \big), \label{eq:model-update}
\end{align}
where $\eta$ is the global learning rate and $\lambda$ is the decoupled weight decay. Then, we perform the following global momentum update (line 8), for some $\beta_2 \in [0, 1]$, 
\begin{align}
\m_{t + 1} = \beta_2 \m_{t} + \frac{1 - \beta_2}{\gamma_t}(\x_{t, 0} - \x_{t, \tau}). \label{eq:mom-update}
\end{align}  
Note that \eqref{eq:ut-update}-\eqref{eq:mom-update} mimics the update rule of Lion \citep{chenlion} (also described in Section \ref{sec:optimizers} in Appendix), by treating the differences accumulated from local steps as pseudo-stochastic gradients. In both \eqref{eq:ut-update} and \eqref{eq:mom-update}, we scale \((\x_{t, 0} - \x_{t, \tau})\) by $\frac{1}{\gamma_t}$ to make the momentum buffer independent of the learning rate $\gamma_t$, which can vary over time when using a learning rate schedule.

\textit{Algorithm instances}. Instances of Algorithm \ref{alg:dist-sign-mom} can be obtained by specifying the parameters $\eta, \gamma_t, \beta_1, \beta_2, \lambda$. For example, setting $\beta_1 = \beta_2 = \beta, \lambda = 0, \tau = 1$ recovers signSGD with momentum as described in \eqref{eq:signmom}. In particular, when $n = 1$, Algorithm \ref{alg:dist-sign-mom} simplifies to a signed Lookahead optimizer \citep{zhang2019Lookahead} with decoupled weight decay. 


\begin{algorithm}[ht]
\caption{Distributed Sign Momentum}
\label{alg:dist-sign-mom}
\begin{algorithmic}[1]
    \REQUIRE Local update directions $\vd_{t, k}^{(i)}$, local learning rate $\gamma$, global learning rate $\eta$, momentum coefficients $\beta_1, \beta_2$. 
    \STATE Initialize \( \x_{0, 0} \), let \( \m_0 = 0. \)
    \FOR{\( t = 0 \) to \( T - 1 \)}
        \FOR{\(i = 1 \) to \( n \) in parallel}
            \FOR{ \( k = 0 \) to \( \tau - 1\)}
            \STATE \( \x^{(i)}_{t, k+1} \gets \x^{(i)}_{t, k} - \gamma_t \vd_{t, k}^{(i)} \)
        \ENDFOR
        \ENDFOR
        \STATE All-reduce step: $\x_{t, \tau} = \frac{1}{n} \sum_{i=1}^n \x^{(i)}_{t, \tau}$.   
        \STATE Update global parameter:  \\ 
        \( \qquad \bu_{t + 1} \gets  \beta_1 \m_{t} + \frac{1 - \beta_1}{\gamma_t }(\x_{t, 0} -\x_{t, \tau}) \) \\ 
        \( \qquad \x_{t + 1, 0} \gets \x_{t, 0} - \eta \gamma_t \big( \text{sign}(\bu_{t+1}) + \lambda \x_{t, 0} \big) \) \\
        \STATE Update momentum: \\
        \( \qquad \m_{t + 1} = \beta_2 \m_t + \frac{1- \beta_2}{\gamma_t}(\x_{t, 0} - \x_{t, \tau})  \) \\
        \STATE Synchronize over all workers: \\
        \( \qquad \forall i \in [n], \ \x^{(i)}_{t + 1, 0} \gets \x_{t+ 1, 0} \)
    \ENDFOR
  \STATE \textbf{return} \( \x_{T, 0} \)
\end{algorithmic}
\end{algorithm} 

\textit{Global sign momentum step}. The updates \eqref{eq:ut-update}-\eqref{eq:mom-update} offer a flexible configuration for the global sign momentum step. Specifically, when $\beta_2 > \beta_1$, the current \((\x_{t, 0} - \x_{t, \tau})\) takes on a larger weight in the exponential moving average, similar to classical momentum as in SlowMo. This adjustment in momentum  can lead to further acceleration, as studied by \citet{shi2022understanding}. Additionally, the decoupled weight decay in \eqref{eq:model-update} is similar to the weight decay used in AdamW, making it more advantageous than Adam.


\textit{Distributed optimizer}. We note that Algorithm \ref{alg:dist-sign-mom} can be seamlessly integrated with the ZERO-series distributed optimizer \citep{rajbhandari2020zero} to handle large models. Consider a layered distributed optimization strategy: GPUs on a single node can be treated as a joint worker, utilizing a distributed optimizer like ZERO2 for local steps of the base optimizer, while the global buffers \(\x_{t, 0}\) and \(\m_{t, 0}\) are distributed across nodes. This approach allows more frequent communication for local steps to leverage faster intra-node communication, such as NVLinks.

\section{Convergence analysis} 
We provide some theoretical justifications on the convergence of Algorithm \ref{alg:dist-sign-mom} under some regular conditions. Let $f_i(\x) := \E_{\bxi^{(i)} \sim \cD_i} f_i (\x, \bxi^{(i)})$. We make the following assumptions on local functions.

\begin{assumption}
\label{as:smooth}
Each local function $f_i$ is differentiable and $\forall \x, \y \in \R^d, \| \nabla f_i(\x) - \nabla f_i(\y)\| \le L \| \x - \y \|$ for some $L > 0$. 
\end{assumption}

Let $\E_{t, k}$ denote the conditional expectation given all the historical randomness up to iteration $(t, k)$. We define the virtual global averages: 
\begin{align*}
    \x_{t, k} = \frac{1}{n}\sum_{i=1}^n \x_{t, k}^{(i)}, \ \vd_{t, k} = \frac{1}{n}\sum_{i = 1}^n \vd_{t, k}^{(i)}. 
\end{align*} 
We assume that the average update direction $\vd_{t, k}$ has bounded variance. 
\begin{assumption}
\label{as:noise}
There exists some $\zeta > 0$ such that $\E [ \| \vd_{t, k}  - \E_{t, k}[\vd_{t, k}]\|^2 ] \le \zeta^2$. 
\end{assumption}

Similar to \cite{sun2023momentum}, we assume that update directions of base optimizers are uniformly bounded throughout the optimization process.
\begin{assumption}
\label{as:bdd-sg}
The local update direction of each local base optimizer is bounded at at all iteration $(t, k)$: $\forall (t, k), \forall i \in [n], \ \| \vd_{t, k}^{(i)} \|^2 \le R^2$ for some $R > 0$. 
\end{assumption} 

\subsection{Randomized sign operator}
For a generic base optimizer that generates update directions satisfying Assumption \ref{as:noise} and \ref{as:bdd-sg}, we present an general analysis that  approximates the sign operator with a randomized analog. For any vector \( \vv = [v_1, \ldots, v_d]^\top \in \R^d \) such that \( \| \vv \| \leq B \), we use the randomized sign operator \(\cS_r(\vv)\), where the $j$-th component is defined as:
\begin{align}
\label{eq:rand-sign-1}
    [\cS_r(\vv)]_j = \begin{cases}
        -\sign(v_j), & \text{with probability (w.p.)} \frac{1}{2} - \frac{|v_j|}{2B}, \\
        \sign(v_j), & \text{ w.p. } \frac{1}{2} + \frac{|v_j|}{2B}.
    \end{cases}
\end{align}  
Or alternatively, 
\begin{align}
\label{eq:rand-sign-2}
    [\cS_r(\vv)]_j = \begin{cases}
        0, & \text{ w.p. } 1 - \frac{|v_j|}{B}, \\
        \sign(v_j), & \text{ w.p. } \frac{|v_j|}{B}.
    \end{cases}
\end{align}
We use these randomized sign operators as a linear and continuous analog of the original sign operator in expectation as in \citet{safaryan2021stochastic}. Let \(\E_{\cS}\) denote the expectation taken over the randomness of the randomized sign operator, we have the following property. 
\begin{lemma}
\label{lm:signopr}
For a random vector $\bv \in \R^d$ that satisfies $\|\vv\| \le B$ almost surely, we have $\E_{\cS}[\vv] = \vv/B$, and $\E_{\cS}[\| \cS_r(\vv) - \vv/B \|^2] \le d$. 
\end{lemma}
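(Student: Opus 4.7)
The plan is to verify both assertions by a direct coordinate-wise computation, reading the probabilities off \eqref{eq:rand-sign-1} (the argument for \eqref{eq:rand-sign-2} is analogous). The randomized sign operator is defined entry by entry and, conditional on $\vv$, its components may be taken independent; the hypothesis $\|\vv\| \le B$ guarantees $|v_j| \le B$ for every $j$, so the probabilities in \eqref{eq:rand-sign-1}--\eqref{eq:rand-sign-2} all lie in $[0,1]$ and the construction is well-posed. I will implicitly read the first claim as $\E_{\cS}[\cS_r(\vv)] = \vv/B$ (the excerpt appears to contain a typographical omission).

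For the mean, I would evaluate $\E_{\cS}[[\cS_r(\vv)]_j]$ directly from \eqref{eq:rand-sign-1}:
\begin{equation*}
\E_{\cS}[[\cS_r(\vv)]_j] \;=\; \sign(v_j)\Bigl(\tfrac{1}{2} + \tfrac{|v_j|}{2B}\Bigr) \;-\; \sign(v_j)\Bigl(\tfrac{1}{2} - \tfrac{|v_j|}{2B}\Bigr) \;=\; \sign(v_j)\cdot \tfrac{|v_j|}{B} \;=\; v_j/B,
\end{equation*}
which immediately gives $\E_{\cS}[\cS_r(\vv)] = \vv/B$.

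For the second-moment bound, the key observation is that under \eqref{eq:rand-sign-1} the random variable $[\cS_r(\vv)]_j$ takes values in $\{-1, +1\}$ (or is identically zero when $v_j = 0$), so $[\cS_r(\vv)]_j^2 \le 1$ almost surely. Using the variance identity component-wise,
\begin{equation*}
\E_{\cS}\bigl[([\cS_r(\vv)]_j - v_j/B)^2\bigr] \;=\; \E_{\cS}\bigl[[\cS_r(\vv)]_j^2\bigr] - (v_j/B)^2 \;\le\; 1 - v_j^2/B^2.
\end{equation*}
Summing over $j = 1, \ldots, d$ and invoking $\|\vv\|^2 \ge 0$ yields
\begin{equation*}
\E_{\cS}\bigl[\|\cS_r(\vv) - \vv/B\|^2\bigr] \;\le\; d - \|\vv\|^2/B^2 \;\le\; d,
\end{equation*}
which is exactly the stated bound. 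For \eqref{eq:rand-sign-2} the same template works, with $\E_{\cS}[[\cS_r(\vv)]_j^2] = |v_j|/B$ in place of $1$, yielding an even tighter inequality $\|\vv\|_1/B - \|\vv\|^2/B^2$ that is still dominated by $d$.

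There is no real obstacle here; the whole argument is two lines of arithmetic per coordinate. The only mild subtlety is noting that the probabilities are well-defined (which uses $|v_j| \le \|\vv\| \le B$) and that the second-moment bound of the first definition is loose while the one for the second definition is tighter; either way the bound $d$ suffices for downstream use.
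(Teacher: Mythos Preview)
Your proof is correct and follows essentially the same route as the paper: verify $\E_{\cS}[\cS_r(\vv)]=\vv/B$ coordinate-wise, then use the variance identity $\E_{\cS}\|\cS_r(\vv)-\vv/B\|^2=\E_{\cS}\|\cS_r(\vv)\|^2-\|\vv\|^2/B^2$ together with the fact that each component of $\cS_r(\vv)$ lies in $\{-1,0,1\}$ to bound the right-hand side by $d$. The paper simply states these steps more tersely, but the argument is identical.
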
  
With the above linerization in expectation, we establish a general convergence for some specific instances of Algorithm \ref{alg:dist-sign-mom}. Proofs of all theorems  are deferred to the Appendix. 

\begin{theorem}
\label{thm:rands-main}
Let Assumptions \ref{as:smooth}, \ref{as:noise}, and \ref{as:bdd-sg} hold. Run Algorithm \ref{alg:dist-sign-mom} by approximating sign with $\cS_r$ in \eqref{eq:rand-sign-1} or \eqref{eq:rand-sign-2} with $B = \tau R$. Take local learning rate $\gamma = \frac{R}{\eta}\sqrt{ \frac{n\tau}{T}}$, weight decay $\lambda = 0$, and momentum coefficients $\beta_1 = \beta_2 = \beta \in [0, 1)$. When the outer iterations number $T \ge 4nL^2 \big[ 4 (\tau - 1)\big( \frac{\tau R}{\eta} - 1 \big)^2 + \frac{8\tau \beta^2}{(1 - \beta)^2} + 1 \big]$, the iterates $\{\x_{t, k}\}$ generated from Algorithm \ref{alg:dist-sign-mom} satisfy that 
\begin{align*}
& \frac{1}{\tau T} \sum_{ t= 0}^{T - 1}\sum_{k = 0}^{\tau - 1} \E \| \nabla f(\x_{t, k} )\|^2 \le \frac{2(f(\x_{0, 0})- f_*)}{ \sqrt{n  \tau T}} \\
& \quad + \underbrace{\frac{1}{\tau T} \sum_{t = 0}^{T - 1}\sum_{k = 0}^{\tau - 1} \E [ \| \nabla f(\x_{t, k}) - \E[\vd_{t, k}] \|^2 ]}_{\text{Effect of base optimizer}}   + \zeta^2 L \sqrt{ \frac{n}{ \tau T }} \\
& \quad +  \frac{4n L^2 \zeta^2}{T}\Big[ \big(\frac{\tau R}{\eta} - 1 \big)^2 + \frac{2 \beta^2}{1 - \beta^2} \Big] + d L R^2 \sqrt{ \frac{n \tau}{T}}. 
\end{align*}
\end{theorem}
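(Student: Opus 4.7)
The plan is to leverage Lemma \ref{lm:signopr} so that after taking expectation over the randomized-sign noise the outer update behaves in expectation as a momentum step with effective learning rate $\eta\gamma_t/(\tau R)$; the analysis then proceeds by a standard descent-lemma argument for momentum-based local SGD, with an extra $d$-dependent price from the randomization. Since $\beta_1=\beta_2=\beta$ I observe $\vu_{t+1}=\m_{t+1}$, and substituting the telescoped local update $\x_{t,0}-\x_{t,\tau}=\gamma_t\sum_{k=0}^{\tau-1}\vd_{t,k}$ into \eqref{eq:mom-update} yields
\[
\m_{t+1}=\beta\m_t+(1-\beta)\sum_{k=0}^{\tau-1}\vd_{t,k}.
\]
Assumption \ref{as:bdd-sg} gives $\|\sum_{k=0}^{\tau-1}\vd_{t,k}\|\le \tau R$, so by induction from $\m_0=\zero$ one gets $\|\m_t\|\le \tau R=B$, which is precisely what is needed to apply Lemma \ref{lm:signopr} with $B=\tau R$.

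I would then apply the $L$-smoothness descent lemma at $\x_{t,0}$ and $\x_{t+1,0}=\x_{t,0}-\eta\gamma_t\cS_r(\m_{t+1})$ (recall $\lambda=0$). Taking conditional expectation over the sign randomness and using $\E_{\cS}[\cS_r(\m_{t+1})]=\m_{t+1}/(\tau R)$ together with $\E_{\cS}\|\cS_r(\m_{t+1})\|^2\le d$ from Lemma \ref{lm:signopr} produces
\[
\E[f(\x_{t+1,0})]\le \E[f(\x_{t,0})]-\frac{\eta\gamma_t}{\tau R}\,\E\langle \nabla f(\x_{t,0}),\m_{t+1}\rangle+\frac{L\eta^2\gamma_t^2 d}{2}.
\]
After telescoping in $t$ and inserting $\gamma=(R/\eta)\sqrt{n\tau/T}$, the initial gap contributes the leading term $2(f(\x_{0,0})-f_*)/\sqrt{n\tau T}$ and the quadratic tail contributes the final $dLR^2\sqrt{n\tau/T}$ summand in the statement.

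The core of the proof is the bound on the cross term $\E\langle \nabla f(\x_{t,0}),\m_{t+1}\rangle$. I would unroll $\m_{t+1}=(1-\beta)\sum_{s=0}^{t}\beta^{t-s}\sum_{k=0}^{\tau-1}\vd_{s,k}$, add and subtract the conditional means $\E[\vd_{s,k}]$ and the true gradients $\nabla f(\x_{s,k})$ at the virtual averaged iterates, and apply Young's inequality. This decomposes the cross term into (i) an aligned piece which, after shifting $\nabla f(\x_{t,0})$ to $\nabla f(\x_{t,k})$ via $L$-smoothness and the drift bound $\|\x_{t,k}^{(i)}-\x_{t,0}\|\le k\gamma R$ coming from Assumption \ref{as:bdd-sg}, delivers the target $\|\nabla f(\x_{t,k})\|^2$ on the left; (ii) the explicit base-optimizer bias $\|\nabla f(\x_{t,k})-\E[\vd_{t,k}]\|^2$ that is left unspecialized in the statement; (iii) a mean-zero noise contribution whose variance is controlled by Assumption \ref{as:noise} and, combined with the geometric tail $\sum_{s<t}\beta^{2(t-s)}\le \beta^2/(1-\beta^2)$ from the momentum recursion and the local-drift contribution, yields both the $\zeta^2 L\sqrt{n/(\tau T)}$ term and the $(4nL^2\zeta^2/T)\bigl[(\tau R/\eta-1)^2+2\beta^2/(1-\beta^2)\bigr]$ term; and (iv) residual fractions of $\|\nabla f(\x_{t,k})\|^2$ produced by the Young splits, which get absorbed into the left-hand side exactly when the lower bound $T\ge 4nL^2\bigl[4(\tau-1)(\tau R/\eta-1)^2+8\tau\beta^2/(1-\beta)^2+1\bigr]$ is enforced.

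The hardest part is the three-way bookkeeping among the momentum buffer, which couples across outer iterations through $\beta$; the local drift, which couples across inner iterations through $\tau>1$; and the base-optimizer noise. In particular, turning $\langle\nabla f(\x_{t,0}),\m_{t+1}\rangle$ into the target average $(1/\tau T)\sum_{t,k}\|\nabla f(\x_{t,k})\|^2$ on the left while keeping the constants compatible with the stated threshold on $T$ requires carefully balanced Young weights across several gradient shifts; this is also where the precise scaling $\gamma=(R/\eta)\sqrt{n\tau/T}$ is essential to equate the $O(1/\sqrt{T})$ and $O(1/\sqrt{T})$ contributions coming from the descent term and the randomization variance, respectively.
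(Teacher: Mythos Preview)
Your overall strategy is sound and would yield an $O(1/\sqrt{T})$ bound, but it differs from the paper's proof in one structural respect, and that difference matters for obtaining the exact constants stated in the theorem.

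The paper does \emph{not} apply the descent lemma at the outer iterates $\x_{t,0}$. Instead it introduces a virtual sequence
\[
\y_{t,k} \;=\; \x_{t,0} \;-\; \frac{\eta\gamma}{\tau R(1-\beta)}\Bigl[\beta\m_t + (1-\beta)\sum_{i=0}^{k-1}\vd_{t,i}\Bigr],
\]
which satisfies $\y_{t,k+1}=\y_{t,k}-\alpha\,\vd_{t,k}$ with $\alpha=\eta\gamma/(\tau R)$, and, after taking expectation over $\cS_r$, also $\E[\y_{t+1,0}]=\E[\y_{t,\tau}]$. The descent lemma is then applied at every inner step $(t,k)$ along $\y_{t,k}$; the momentum and the sign randomness enter only through the deterministic gap $\|\y_{t,k}-\x_{t,k}\|^2$ and through the one-step crossing $f(\y_{t+1,0})-f(\y_{t,\tau})$ (this last piece is where Lemma~\ref{lm:signopr} produces the $dL\eta^2\gamma^2/2$ that becomes $dLR^2\sqrt{n\tau/T}$). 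This virtual-iterate construction is what makes the left-hand side emerge directly as $\sum_{t,k}\|\nabla f(\x_{t,k})\|^2$ without any gradient shifting, and is also what decouples the momentum lag from the cross term: the $\beta$-dependence is confined to $\|\m_t\|^2$ inside $\|\y_{t,k}-\x_{t,k}\|^2$ and is bounded separately (the $M_t$ term in the proof).

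Your route---descent at $\x_{t,0}$, unroll $\m_{t+1}$, shift $\nabla f(\x_{t,0})$ to $\nabla f(\x_{t,k})$---would work, but the drift you incur is $\|\x_{t,0}-\x_{t,k}\|\le \gamma k R$, i.e.\ proportional to $\gamma$. In the paper's argument the relevant drift is $\|\y_{t,k}-\x_{t,k}\|$, which is proportional to $|\gamma-\alpha|=\gamma\,|1-\eta/(\tau R)|$. After substituting $\gamma=\alpha\tau R/\eta$, the paper therefore gets $\alpha^2(\tau R/\eta-1)^2$ in the variance contribution, whereas your shifting would naturally produce $\alpha^2(\tau R/\eta)^2$. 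That is exactly the $(\tau R/\eta-1)^2$ factor appearing both in the bound and in the threshold on $T$; it is a signature of the virtual-iterate cancellation and would not fall out of the direct approach you describe. You also inherit cross-time inner products $\langle\nabla f(\x_{t,0}),\vd_{s,k}\rangle$ for $s<t$ from the unrolled momentum, which are not mean-zero and require further gradient shifts across outer rounds; the paper avoids this entirely because the momentum only appears quadratically via $\|\m_t\|^2$. So your plan proves a cognate theorem, but to recover the statement as written you should switch to the auxiliary sequence $\y_{t,k}$.
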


Theorem \ref{thm:rands-main} provides a general convergence guarantee for the simple sign momentum case of Algorithm \ref{alg:dist-sign-mom}. The convergence rate depends on the \textit{effect of base optimizer}, which can measure both gradient heterogeneity across workers and stochastic gradient bias. We next specialize to the case where the base optimizer is SGD. 

\begin{table*}[t]
    \centering
    \caption{Model configurations and peak learning rates}
    \label{tbe:gpt2-configs}
    \begin{tabular}{|c|c|c|c|c|c|}
        \hline
         & Size & Embedding dimension & \# of heads & Depth & Peak LR \\ \hline 
        Small     & 125M   & 768   & 12  & 12 & 5e-4 \\ \hline
        Medium    & 355M   & 1024  & 16  & 24 & 2e-4 \\ \hline
        Large     & 770M   & 1280  & 20  & 36 & 2e-4 \\ \hline
    \end{tabular}
\end{table*} 

\begin{theorem}
\label{thm:base-sgd}
Let Assumptions \ref{as:smooth}, \ref{as:bdd-sg} hold, and in addition assume that: 
\begin{enumerate}[label=(\alph*)]
    \item the local update direction $\vd_{t, k}^{(i)} = \nabla f_i(\x_{t,k}^{(i)}, \bxi_{t,k}^{(i)})$, and $\forall \x \in \R^d, \E_{\bxi^{(i)} \sim \cD_i} [\| \nabla f_i(\x, \bxi^{(i)}) - \nabla f_i(\x) \|^2 ] \le \sigma^2$ for some $\sigma > 0$; 
    \item  $\forall \x \in \R^d, (1/n)\sum_{i=1}^n \| \nabla f(\x) - \nabla f_i(\x) \|^2 \le \delta^2$ for some $\delta > 0$.
\end{enumerate} 
Then, by approximating sign with randomized operator $\cS_r$ and using parameters $\gamma, \eta, \lambda, \beta_1, \beta_2$ and $T$ as in Theorem \ref{thm:rands-main}, we have $\frac{1}{\tau T} \sum_{t=0}^{T - 1} \sum_{k=0}^{\tau - 1} \E [ \| \nabla f(\x_{t, k}) \|^2 ] = O(1/\sqrt{ T})$, specifically, it 
\begin{align*}
    & \frac{1}{\tau T} \sum_{ t= 0}^{T - 1}\sum_{k = 0}^{\tau - 1} \E \| \nabla f(\x_{t, k} )\|^2 \le \frac{2(f(\x_{0, 0})- f_*)}{ \sqrt{n  \tau T}} \\
    & \quad + \frac{3 n \tau^2 L^2 R^2 (\sigma^2 + 3 \tau \delta^2)}{\eta^2 T}  + \sigma^2 L \sqrt{ \frac{n}{ \tau T }}  \\
    & \quad + \frac{4n L^2 \sigma^2}{T}\Big[ \big(\frac{\tau R}{\eta} - 1 \big)^2 + \frac{2 \beta^2}{1 - \beta^2} \Big] + dLR^2\sqrt{\frac{n \tau}{T}}. 
\end{align*}
\end{theorem}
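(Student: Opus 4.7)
The plan is to invoke Theorem~\ref{thm:rands-main} as a black box and reduce the proof to bounding the ``Effect of base optimizer'' term for the specific case where the base optimizer is mini-batch SGD. The remaining terms of Theorem~\ref{thm:rands-main} transfer verbatim upon taking $\zeta^2 = \sigma^2$ in Assumption~\ref{as:noise}, which is valid since under assumption~(a) the variance of the virtual mean direction $\vd_{t,k}$ is at most $\sigma^2/n \le \sigma^2$.

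First, under assumption~(a) the conditional mean equals $\E_{t,k}[\vd_{t,k}] = (1/n)\sum_i \nabla f_i(\x_{t,k}^{(i)})$. Writing $\nabla f(\x_{t,k}) = (1/n)\sum_i \nabla f_i(\x_{t,k})$, applying Jensen's inequality, the $L$-smoothness of each $f_i$ (Assumption~\ref{as:smooth}), and the fact that the arithmetic mean $\x_{t,k}$ minimizes $y \mapsto \frac{1}{n}\sum_i \|\x_{t,k}^{(i)} - y\|^2$ reduces the effect term to the consensus drift:
\[
\E\,\bigl\|\nabla f(\x_{t,k}) - \E_{t,k}[\vd_{t,k}]\bigr\|^2 \;\le\; L^2 U_{t,k}, \qquad U_{t,k} := \frac{1}{n}\sum_{i=1}^n \E\|\x_{t,k}^{(i)} - \x_{t,0}\|^2.
\]

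The main task is a drift bound of the form $U_{t,k} \le 3 \gamma_t^2 \tau(\sigma^2 + 3\tau\delta^2)$. I would derive it from a one-step recursion: splitting each SGD direction into its conditional mean and noise (the latter contributing $\gamma_t^2 \sigma^2$ by independence) and applying Young's inequality with parameter $\alpha = 1/(\tau-1)$ gives $U_{t,k+1} \le (1 + \tfrac{1}{\tau-1})U_{t,k} + \tau\gamma_t^2\cdot\frac{1}{n}\sum_i \E\|\nabla f_i(\x_{t,k}^{(i)})\|^2 + \gamma_t^2\sigma^2$. I then bound the gradient-norm term by $3L^2 U_{t,k} + 3\delta^2 + 3\|\nabla f(\x_{t,0})\|^2$ using smoothness and assumption~(b). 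Under the chosen $\gamma_t$, the quantity $\tau^2 L^2 \gamma_t^2 = O(n\tau^3/T)$ is $\ll 1$ thanks to the lower bound on $T$, so the growth factor $(1 + \tfrac{1}{\tau-1} + 3\tau L^2\gamma_t^2)^{\tau-1} \le e$, and telescoping produces $U_{t,k} \le 3e\, \gamma_t^2 \tau \bigl[\sigma^2 + 3\tau\delta^2 + 3\tau\|\nabla f(\x_{t,0})\|^2\bigr]$.

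The main technical obstacle is absorbing the residual $\|\nabla f(\x_{t,0})\|^2$ factor in the drift bound. I would fold it into the left-hand side of the descent inequality of Theorem~\ref{thm:rands-main}: the extra term contributes a prefactor of order $nL^2R^2\tau^3/(\eta^2 T)$ in front of $\frac{1}{T}\sum_t \|\nabla f(\x_{t,0})\|^2$, which is $o(1)$ in the regime $T \ge \Omega(n\tau^3)$ implied by the stated lower bound on $T$; combined with $\|\nabla f(\x_{t,0})\|^2 \le \frac{2}{\tau}\sum_k \|\nabla f(\x_{t,k})\|^2 + 2L^2\cdot(\text{drift})$ from smoothness, a standard rearrangement sends this term back into the LHS without altering the leading constants. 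Substituting the resulting $L^2 U_{t,k} \le 3nL^2R^2\tau^2(\sigma^2 + 3\tau\delta^2)/(\eta^2 T)$ (after plugging in $\gamma^2 = nR^2\tau/(\eta^2 T)$) into the bound of Theorem~\ref{thm:rands-main} and combining with the $\zeta^2 = \sigma^2$ terms yields the stated inequality, and the overall rate is $O(1/\sqrt{T})$.
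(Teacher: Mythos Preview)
Your proposal is correct and follows the same overall strategy as the paper: invoke Theorem~\ref{thm:rands-main}, take $\zeta^2=\sigma^2$, and reduce the ``effect of base optimizer'' term to a consensus/drift bound via $L$-smoothness of the $f_i$.

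The difference lies in how the drift is controlled. The paper does not derive the drift bound from scratch; it simply quotes an existing local-SGD deviation estimate (equation~(87) in \citet{wang2019slowmo} or Lemma~5 with $\beta=0$ in \citet{yu2019linear}), which bounds $\frac{1}{n}\sum_i\|\x_{t,k}^{(i)}-\x_{t,k}\|^2$ directly. Because that quantity is the deviation from the \emph{running average} $\x_{t,k}$, the common component $\nabla f(\cdot)$ cancels in the recursion and the resulting bound is purely in terms of $\sigma^2$ and $\delta^2$ (the $(1-12\gamma^2L^2\tau^2)^{-1}$ factor comes from a self-bounding recursion on the drift itself). Under $\gamma L\tau\le 1/6$---which the paper notes follows from $T\ge 36nL^2R^2\tau^3/\eta^2$---this gives exactly the constant $3$ in the second term of the theorem.

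You instead bound the deviation from the \emph{sync point} $\x_{t,0}$, so the full local gradients enter the recursion and a residual $\|\nabla f(\x_{t,0})\|^2$ term survives, forcing the extra absorption step. That step is standard and your sketch of it is fine, but it costs you in constants (your $3e$ versus the paper's $3$) and makes the argument longer. If you want to match the stated constants exactly, either work directly with $\|\x_{t,k}^{(i)}-\x_{t,k}\|^2$ in the recursion (so the global gradient drops out), or cite the literature bound as the paper does.
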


\begin{remark}
We compare the instance in Theorem \ref{thm:base-sgd} with a related algorithm, Federated Majority Vote signSGD with Stochastic Simple Momentum (Federated MV-sto-signSGD-SIM), analyzed by \citet{sun2023momentum}. Federated MV-sto-signSGD-SIM incorporates similar algorithmic concepts but remains distinct from Algorithm \ref{alg:dist-sign-mom}. First, our algorithm uses aggregated local differences $(\x_{t, 0} - \x_{t, \tau})$ for a global  momentum update while MV-sto-signSGD-SIM uses local stochastic gradients to update local momentum. Second, \citet{sun2023momentum} uses randomized sign to compress worker to server communications, i.e., sign based majority vote,  while we use full-precision communication among workers.
\end{remark}

\subsection{Sign operator}
When the base optimizer is SGD, we also present its convergence analysis using actual sign for Algorithm \ref{alg:dist-sign-mom}.
\begin{theorem}
\label{thm:hard_sign}
Let Assumptions \ref{as:smooth}, \ref{as:bdd-sg} hold. In addition, the  local update on worker $i$ satisfies that $\vd_{t, k}^{(i)} = \nabla f_i(\x_{t,k}^{(i)}, \bxi_{t,k}^{(i)})$, and $\forall \x \in \R^d, \E_{\bxi^{(i)} \sim \cD_i} [\| \nabla f_i(\x, \bxi^{(i)}) - \nabla f_i(\x) \|^2 ] \le \sigma^2$ for some $\sigma > 0$. Taking $\eta = \frac{1}{L T^{3/4}}$, and $1 - \beta = \frac{1}{\sqrt{T}}$, it holds that
\begin{align*}
     & \frac{1}{T}\sum_{t = 0}^{T  - 1} \E \big[ \| \nabla f(\x_{t, 0}) \ \|_1 \big]  \le \frac{L(f(\x_{0, 0}) - f_*)}{\gamma  T^{1/4}} \\
     & \quad + \frac{2\sqrt{d}}{T^{1/2}} \|\nabla f(\x_{0, 0}) \| + \frac{2d \gamma}{T^{1/4}} \\
     & \quad + \frac{2\sigma}{T^{1/4}} \sqrt{\frac{d}{\tau n} } +  \frac{\sqrt{d} \tau R + \gamma d / 2}{T^{3/4}}.
\end{align*} 
\end{theorem}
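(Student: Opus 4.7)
The strategy is a descent argument on the global iterates \(\x_{t, 0}\), paired with an exponential moving average (EMA) analysis of the momentum buffer. Under the setting of Theorem \ref{thm:hard_sign} one has \(\beta_1 = \beta_2 = \beta\) and (implicitly) \(\lambda = 0\), so the global step is \(\x_{t+1, 0} = \x_{t, 0} - \eta \gamma \sign(\vu_{t+1})\) with \(\|\sign(\vu_{t+1})\|_2^2 = d\). Assumption \ref{as:smooth} then gives
\begin{align*}
f(\x_{t+1, 0}) \le f(\x_{t, 0}) - \eta \gamma \langle \nabla f(\x_{t, 0}), \sign(\vu_{t+1})\rangle + \tfrac{L d (\eta \gamma)^2}{2},
\end{align*}
and setting \(z_{t+1} := \vu_{t+1}/\tau\) (the scaled momentum, which lives on the same scale as a gradient since \(\vu_{t+1}\) aggregates \(\tau\) local gradients), the standard sign trick
\begin{align*}
-\langle \nabla f(\x_{t, 0}), \sign(\vu_{t+1})\rangle \le -\|\nabla f(\x_{t, 0})\|_1 + 2\|\nabla f(\x_{t, 0}) - z_{t+1}\|_1
\end{align*}
(which follows because on any coordinate \(j\) where the signs of \([\vu_{t+1}]_j\) and \([\nabla f(\x_{t, 0})]_j\) disagree, \(|[\nabla f(\x_{t, 0})]_j| \le |[\nabla f(\x_{t, 0})]_j - [z_{t+1}]_j|\)) reduces the descent to controlling the momentum error in \(\ell_2\). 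Summing over \(t\), dividing by \(T\), bounding \(\|\cdot\|_1 \le \sqrt{d}\|\cdot\|_2\), and substituting \(\eta = 1/(LT^{3/4})\) already isolates the \(\frac{L(f_0 - f_*)}{\gamma T^{1/4}}\) and \(\frac{\gamma d / 2}{T^{3/4}}\) terms of the claim, and leaves the task of bounding \(\frac{2\sqrt{d}}{T}\sum_{t=0}^{T-1}\E\|z_{t+1} - \nabla f(\x_{t, 0})\|_2\).

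For that bound, I unroll the EMA with \(z_0 = \zero\), writing \(z_{t+1} = (1-\beta)\sum_{s=0}^{t}\beta^{t-s}\tilde{\vg}_s\) for \(\tilde{\vg}_s := (n\tau)^{-1}\sum_{i, k}\nabla f_i(\x_{s, k}^{(i)}, \bxi_{s, k}^{(i)})\), and split \(\tilde{\vg}_s - \nabla f(\x_{s, 0}) = \hat{\epsilon}_s + \nu_s\). Here \(\hat{\epsilon}_s = (n\tau)^{-1}\sum_{i, k}[\nabla f_i(\x_{s, k}^{(i)}) - \nabla f_i(\x_{s, 0})]\) is the conditional-mean drift arising from \(\tau\) local SGD steps (bounded in norm by \(L\gamma\tau R/2\) via Assumptions \ref{as:smooth} and \ref{as:bdd-sg}, since \(\|\x_{s, k}^{(i)} - \x_{s, 0}\| \le \gamma k R\)), and \(\nu_s\) is a martingale-difference noise whose per-step variance is at most \(\sigma^2/(n\tau)\) by independence of the \(n\tau\) mini-batch draws. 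This produces the four-piece decomposition
\begin{align*}
z_{t+1} - \nabla f(\x_{t, 0}) = -\beta^{t+1}\nabla f(\x_{t, 0}) + (1-\beta)\sum_{s=0}^{t}\beta^{t-s}\hat{\epsilon}_s + (1-\beta)\sum_{s=0}^{t}\beta^{t-s}\nu_s + (1-\beta)\sum_{s=0}^{t}\beta^{t-s}\bigl[\nabla f(\x_{s, 0}) - \nabla f(\x_{t, 0})\bigr],
\end{align*}
whose pieces I will call initial-bias, drift, noise, and staleness. The initial-bias averages in \(t\) to \(O(\|\nabla f(\x_{0, 0})\|/\sqrt{T})\) via \(\sum_t \beta^{t+1} \le 1/(1-\beta) = \sqrt{T}\); the staleness uses \(\|\nabla f(\x_{s, 0}) - \nabla f(\x_{t, 0})\| \le L\eta\gamma\sqrt{d}(t-s)\) together with \((1-\beta)\sum_s\beta^{t-s}(t-s) \le \beta/(1-\beta)\) to produce the \(2 d \gamma / T^{1/4}\) contribution; the drift uses \(\|\hat{\epsilon}_s\| \le L\gamma\tau R/2\); and the noise uses martingale orthogonality to get
\begin{align*}
\E\Bigl\|(1-\beta)\sum_{s=0}^{t}\beta^{t-s}\nu_s\Bigr\|^2 \le (1-\beta)^2\sum_{s=0}^{t}\beta^{2(t-s)}\cdot \frac{\sigma^2}{n\tau} \le \frac{1-\beta}{1+\beta}\cdot \frac{\sigma^2}{n\tau},
\end{align*}
so this piece contributes \(\sqrt{(1-\beta)/(n\tau)}\,\sigma = \sigma/(T^{1/4}\sqrt{n\tau})\) in expected \(\ell_2\) norm. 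Plugging the four component bounds back, multiplying by \(2\sqrt{d}\), and using \(\eta = 1/(LT^{3/4})\), \(1-\beta = 1/\sqrt{T}\) recovers the remaining terms in the claimed inequality.

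\emph{Main obstacle.} The critical step is the noise bound: the naive triangle inequality \((1-\beta)\sum_s\beta^{t-s}\E\|\nu_s\| \le \sigma/\sqrt{n\tau}\) would leave a \(T\)-independent residual and preclude the \(T^{-1/4}\) rate. Exploiting the martingale-difference structure of \(\nu_s\) (independent over \((i, s, k)\)) inside a squared norm, so that one gains the \(\sqrt{1-\beta}\) variance-reduction factor of the EMA, is what converts this into a \(T^{-1/4}\) contribution. Coordinating this noise bound with the deterministic drift (carrying \(\tau, R\)), the staleness (carrying \(L\eta\gamma\sqrt{d}\)), and the initial-bias term, while keeping the free parameter \(\gamma\) explicit throughout, is the main bookkeeping burden.
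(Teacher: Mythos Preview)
Your approach is essentially the paper's: a coordinate-wise sign-mismatch descent lemma reducing everything to control of $\E\|\m_{t+1}/\tau - \nabla f(\x_{t,0})\|$, then an EMA unrolling into initial-bias, staleness, local-drift, and martingale-noise pieces, with the crucial $\sqrt{1-\beta}$ variance reduction on the noise term exactly as you highlight.

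One slip to repair: your direct unrolling produces the initial-bias term $-\beta^{t+1}\nabla f(\x_{t,0})$, which carries the \emph{current} gradient, so the claim that it averages in $t$ to $O(\|\nabla f(\x_{0,0})\|/\sqrt{T})$ does not follow from $\sum_t\beta^{t+1}\le 1/(1-\beta)$ alone. The paper sidesteps this by writing the one-step recursion for $\beps_{t+1}:=\m_{t+1}-\tau\nabla f(\x_{t,0})$, namely $\beps_{t+1}=\beta\beps_t+\beta\tau(\nabla f(\x_{t-1,0})-\nabla f(\x_{t,0}))+\cdots$, so that unrolling gives $\beta^{t+1}\beps_0=-\tau\beta^{t+1}\nabla f(\x_{0,0})$ directly, with the one-step gradient differences absorbed into the staleness piece. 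Your route is still fine if you insert $\nabla f(\x_{t,0})=\nabla f(\x_{0,0})+[\nabla f(\x_{t,0})-\nabla f(\x_{0,0})]$ and bound the bracket by $Lt\eta\gamma\sqrt{d}$; the resulting correction is of the same order as your staleness term after substituting $\eta,\beta$ and does no harm, but make the step explicit. Separately, your drift bound correctly uses $\gamma$ (local steps move by at most $\gamma kR$) and will contribute $\sqrt{d}L\gamma\tau R$ rather than the stated $\sqrt{d}\tau R/T^{3/4}$; the paper reaches the latter via $\|\x_{s,k}^{(i)}-\x_{s,0}\|\le\eta kR$, which appears to be a slip for $\gamma kR$, so do not be alarmed if your honest bookkeeping disagrees with the theorem on that particular term.
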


\begin{remark}
The convergence rate above in terms of $\ell_1$ gradient norm matches the rate of centralized sign momentum method, namely, update in \eqref{eq:signmom}, as shown in \citet{sun2023momentum}.  Specifically, in one of the leading terms $\frac{2\sigma}{T^{1/4}}\sqrt{\frac{d}{\tau n}}$, the benefit of multiple local steps and linear speedup are demonstrated, and it suggests our algorithm is preferable in large noise regime. We note that \citet{sun2023momentum} also shows similar convergence rate of federated MV-sto-signSGD-SIM), but only to an $O(\frac{dR}{\sqrt{n}})$ neighborhood under the same Assumption \ref{as:bdd-sg}, due to its use of randomized signs for worker to server communication. 
\end{remark}

\section{Experiments}

We evaluate Algorithm \ref{alg:dist-sign-mom} on auto-regressive language modeling using GPT-2 \citep{radford2019language}, with model sizes ranging from 125M to 770M parameters, trained from scratch on the OpenWebText dataset \citep{gokaslan2019openwebtext}. Following standard practice, we set the context length of GPT-2 to 1024 tokens. We consider three versions of GPT-2: 125M (Small), 355M (Medium), and 770M (Large) parameters, with configurations as detailed in Table \ref{tbe:gpt2-configs}.

\begin{figure*}[ht]
    \centering
    \begin{minipage}{0.32\textwidth}
        \centering
        \includegraphics[width=\linewidth]{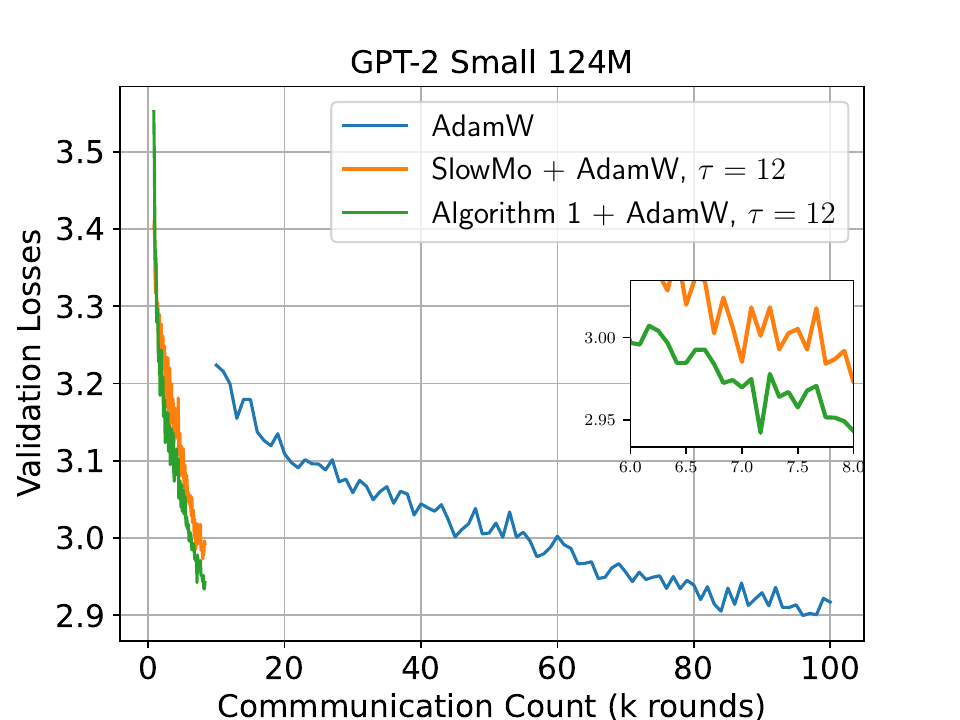}
    \end{minipage}\hfill
    \begin{minipage}{0.32\textwidth}
        \centering
        \includegraphics[width=\linewidth]{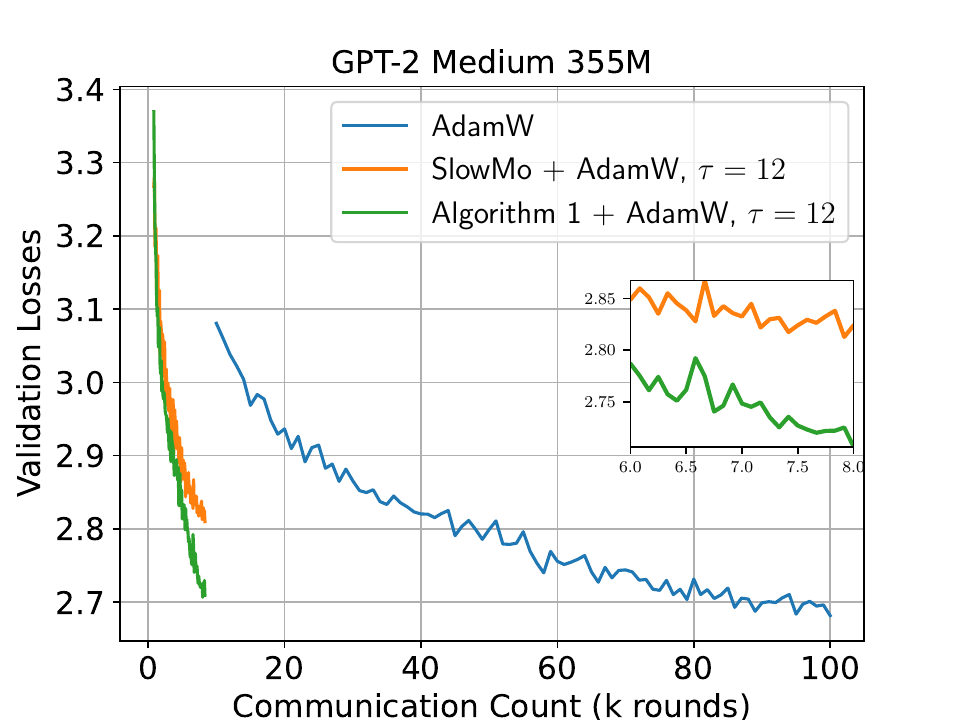}
    \end{minipage}\hfill
    \begin{minipage}{0.32\textwidth}
        \centering
        \includegraphics[width=\linewidth]{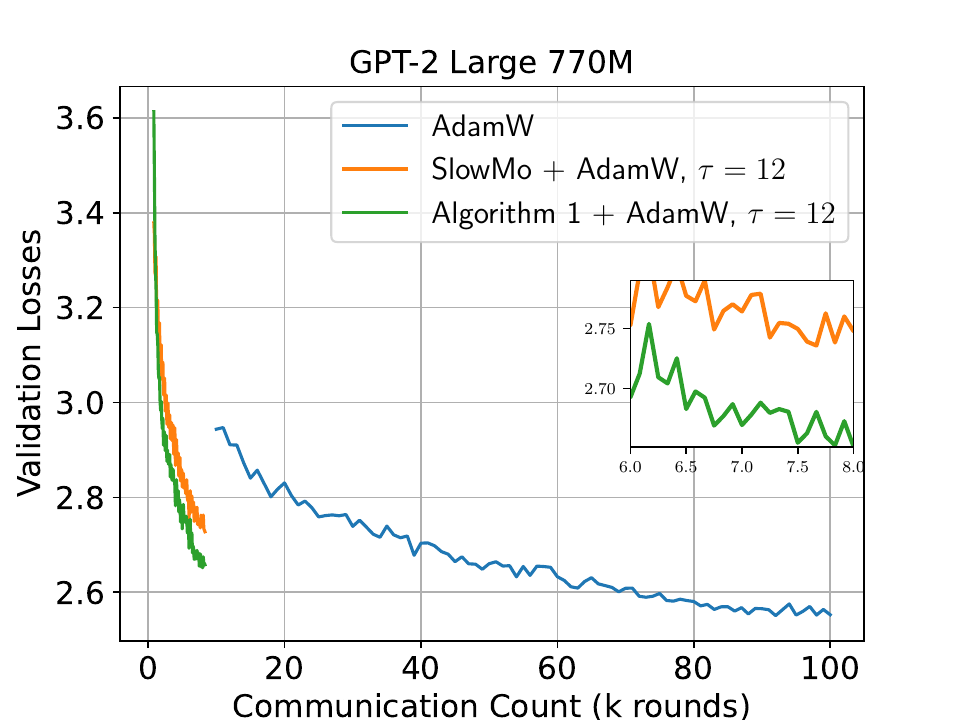}
    \end{minipage}
    \caption{Validation loss curves versus communication rounds for GPT-2 small, medium, and large. Communication interval for our Algorithm \ref{alg:dist-sign-mom} and SlowMo are set as  $\tau = 12$.}
    \label{fig:val-loss-curve-comm}
\end{figure*}

\begin{figure*}
    \centering
    \begin{minipage}{0.32\textwidth}
        \centering
        \includegraphics[width=\linewidth]{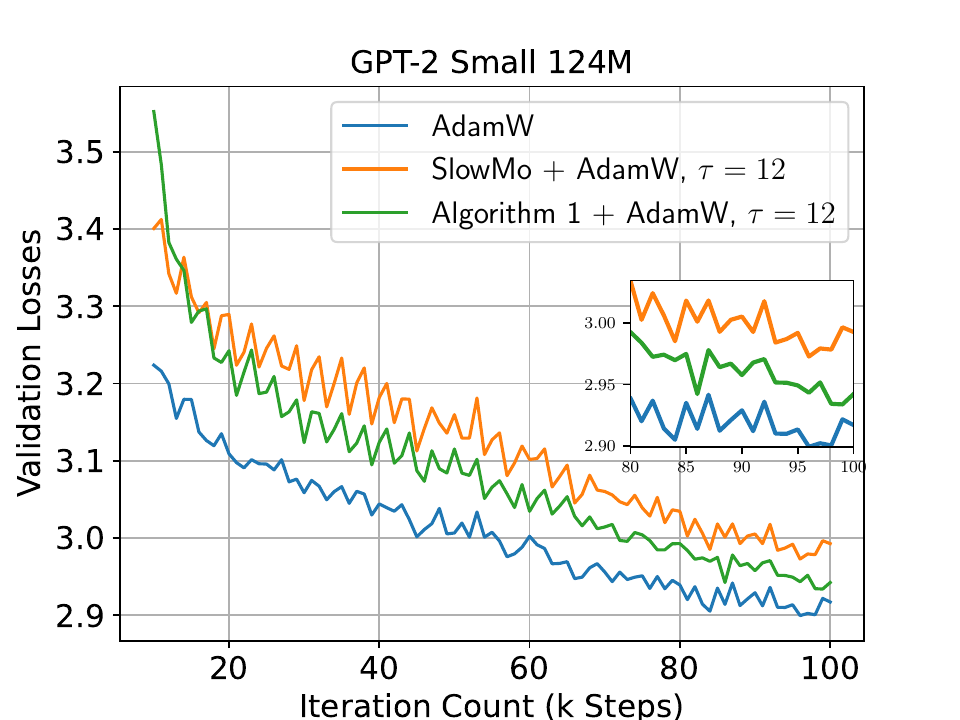}
    \end{minipage}\hfill
    \begin{minipage}{0.32\textwidth}
        \centering
        \includegraphics[width=\linewidth]{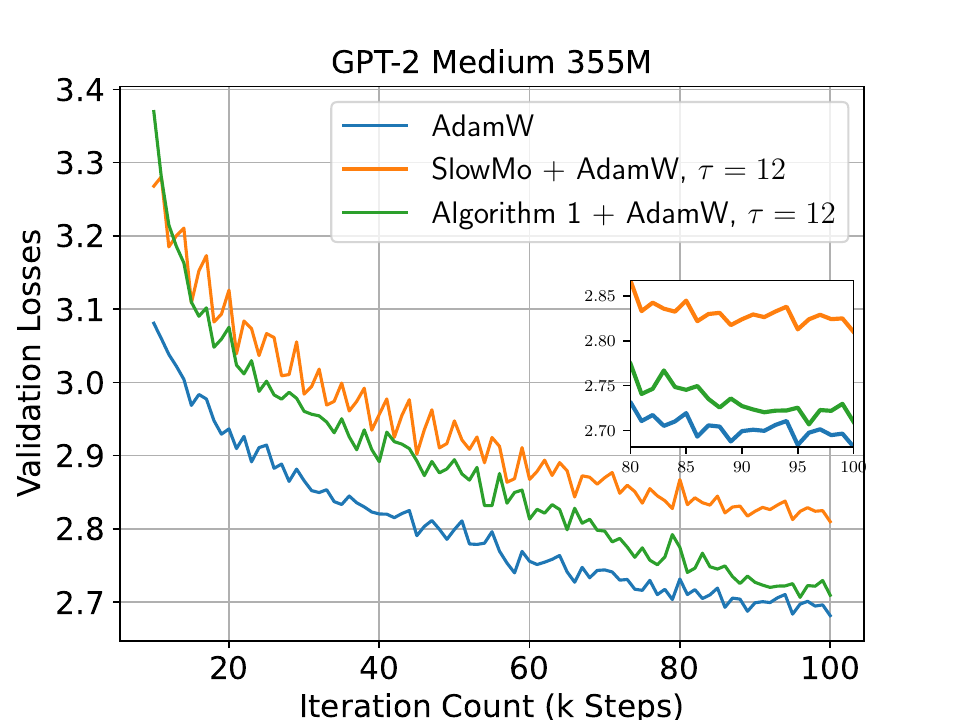}
    \end{minipage}\hfill
    \begin{minipage}{0.32\textwidth}
        \centering
        \includegraphics[width=\linewidth]{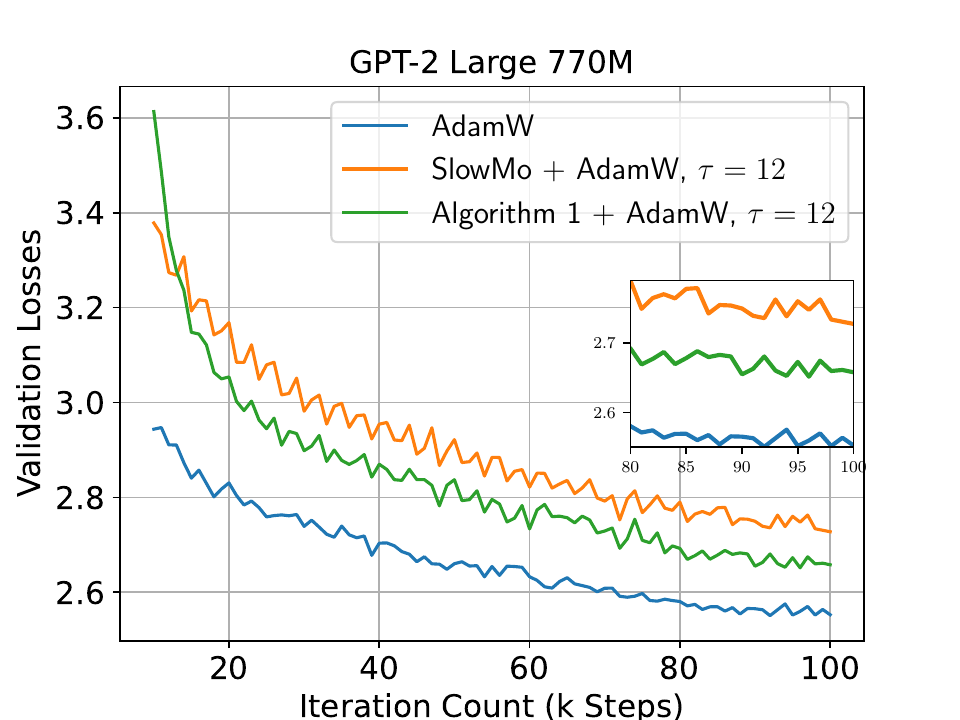}
    \end{minipage}
    \caption{Validation loss curves versus computation rounds for GPT-2 small, medium, and large. Communication interval for our Algorithm \ref{alg:dist-sign-mom} and SlowMo are set as  $\tau = 12$.}
    \label{fig:val-loss-curve}
\end{figure*}

\textit{Baselines.} We benchmark the empirical performance of our proposed framework, namely, Algorithm \ref{alg:dist-sign-mom}, using AdamW as the base optimizer, against the state-of-the-art distributed training method with multiple local steps, i.e., SlowMo (Algorithm \ref{alg:slowmo} in Appendix) that also uses AdamW as the base optimizer. AdamW is the dominant optimizer for the pre-training of Transformers \citep{liusophia}, and according to \citep{sun2024co2}, SlowMo with AdamW as the base optimizer is among the best performing distributed training methods with multiple local steps. We measure practical performance of Algorithm \ref{alg:dist-sign-mom} and SlowMo in terms of validation losses, i.e., token-level log perplexity. In general, under the same compute budget, due to significant communication savings, distributed methods with multiple local steps suffer from performance drop compared to those methods that communicate in every computation round. To demonstrate that our proposed Algorithm \ref{alg:dist-sign-mom} suffers from much less performance drop, we also report the validation losses of running standalone AdamW optimizer that communicates (all-reduce mini-batch gradients) in every computation round. We omit the comparison with local AdamW, i.e., averaging models over workers after every $\tau$ AdamW steps at each worker, due to its poor performance. 

\begin{table*}[h]
    \centering
    \renewcommand{\arraystretch}{1.3} 
    \begin{tabular}{|c|c||c|c||c|c||c|c|}
        \hline
        Alg. & Com. red. & \multicolumn{2}{c||}{GPT-2 Small 124M} & \multicolumn{2}{c||}{GPT-2 Medium 355M} & \multicolumn{2}{c|}{GPT-2 Large 770M} \\ 
        \hline
          &  & Val. &  Improv. & Val. &  Improv. & Val. &  Improv. \\
        \hline
        AdamW & N.A. & 2.917 &  & 2.682 &  & 2.552 &  \\
        \Xhline{1pt} 
        SlowMo & $12\times$ & 2.993 &  & 2.810 &  & 2.727 &  \\
        \hline
        Algorithm 1 & $12\times$ & 2.942 & \textbf{5.23\%} & 2.709 & \textbf{10.63\%} & 2.658 & \textbf{7.14\%} \\
        \Xhline{1pt}
        SlowMo & $24\times$ & 2.995 &  & 2.800 &  & 2.733 &  \\
        \hline
        Algorithm 1 & $24\times$ & 2.957 & \textbf{3.8\%} & 2.725 &  \textbf{7.79\%} & 2.864 & \textbf{14.00\%} \\
        \Xhline{1pt}
        SlowMo & $36\times$ & 3.000 &  & 2.818 &  & 2.736 &  \\
        \hline
        Algorithm 1 & $36\times$ & 2.972 & \textbf{2.84\%} & 2.750 & \textbf{7.04\%} & 2.714 & \textbf{2.22\%} \\
        \Xhline{1pt}
    \end{tabular}
    \caption{Performance comparison of standalone AdamW (with per iteration communcation), SlowMo + AdamW, and Algorithm 1 + AadmW under communcation intervals $\tau = 12, 24, 36$ for GPT-2 small, medium, and large. Com. red. means communication reduction, and Val. denotes final validation losses, Improv. is the improvement of Algorithm 1 over baseline SlowMo in terms of perplexity, i.e., the natural exponential of validation loss improvement.}
    \label{tab:example}
\end{table*}

\textit{Implementations.}  
We set the total batch size to 480 for all algorithms, and run each for 100k steps to ensure the same compute budget, as well as the same total number of tokens. In this setup, we fine-tune the recommended parameters for AdamW (both for the base optimizer AdamW in Algorithm \ref{alg:dist-sign-mom} and SlowMo): $\beta_1 = 0.9$, $\beta_2 = 0.95$, and a cosine learning rate schedule \citep{loshchilov2016sgdr} with a 2k-step warm-up. The peak learning rate (LR) is specified in Table \ref{tbe:gpt2-configs}, and the final LR is set to $0.05 \times$ the peak LR. For the global sign momentum step in Algorithm \ref{alg:dist-sign-mom}, we apply the recommended Lion parameters: $\beta_1 = 0.95$, $\beta_2 = 0.98$, and $\lambda = 0.1$ \citep{liusophia}. We tune the momentum coefficient and global LR for SlowMo, and the global LR for Algorithm \ref{alg:dist-sign-mom}. Performance is tested on three different sizes of GPT-2 models across varying numbers of workers.

\textit{Parameter tuning}. We performn the following model pre-training.
\begin{enumerate}
    \item  GPT-2 Small on 8 workers. We tune SlowMo with momentum coefficients in $\{0.2, 0.4, 0.5, 0.6, 0.7, 0.8, $ $0.9\}$, and global LR in $\{0.5,1, 2 \}$; and we tune global LR of Algorithm \ref{alg:dist-sign-mom} in $\{0.1,0.5, 0.8, 1.2, 1.5\}$.
    \item GPT-2 Medium on 8 workers. For SlowMo, we use the best performing momentum coefficient in GPT-2 Small training, and tune global LR in $\{0.5, 0.8, 1.0\}$. For Algorithm \ref{alg:dist-sign-mom}, we tune global LR in $\{0.5, 0.8, 1.0, 1.2, 1.5, 2.0\}$. 
    \item GPT-2 Large on 16 workers. SlowMo momentum coefficient is chosen from $\{0.5, 0.6\}$ and global LR is from $\{0.5, 1.0\}$. Algorithm \ref{alg:dist-sign-mom} uses the best global LR in $\{0.5, 0.8, 1.0, 1.2\}$. 
\end{enumerate}

\textit{Performance improvements}. In Figure \ref{fig:val-loss-curve-comm} and \ref{fig:val-loss-curve}, we present the validation loss curves against communication rounds and computation rounds, respectively, during the pre-training of three GPT-2 models. The communication interval $\tau$ for both our proposed Algorithm \ref{alg:dist-sign-mom} and SlowMo is set to 12. Figure \ref{fig:val-loss-curve-comm} and \ref{fig:val-loss-curve} show that our proposed Algorithm \ref{alg:dist-sign-mom} saves $12\times$ communications compared to standalone AdamW that communicates in every computation round, and achieves comparable performance. In all three experiments, our method consistently outperforms SlowMo by a significant margin, and suffers less than \textit{half performance drop} compared to the standalone AdamW, particularly for GPT-2 Small and GPT-2 Medium. For example, for GPT-2 medium, Algorithm 1 achieves final validation loss 2.709, while SlowMo only reaches 2.810. The 0.1 validation loss improvement is significant, since according to scaling laws \cite{liusophia, hoffmann2022training}, even an 0.05 validation loss improvement can take $2\times$ compute to achieve.

\textit{On the effects of communication interval $\tau$.} In Table \ref{tab:example}, we report the final validation losses of training three sized GPT-2 models with various algorithm setups. It shows that Algorithm \ref{alg:dist-sign-mom} achieves consistent and robust performance improvements  versus SlowMo under different communication intervals, $\tau = 12, 24, 36$, across all model sizes, highlighting the advantages of \textit{sign momentum} over standard momentum. Additionally, we observe that as $\tau$ increases, the performance gap between our method and SlowMo narrows. Nevertheless, a $10 \times$ or $20 \times$ reduction in communication would still offer substantial communication savings in practical distributed training scenarios where per-iteration communication is prohibitively expensive.

\textit{More base optimizers}. We also test our proposed framework Algorithm \ref{alg:dist-sign-mom} using base optimizer Sophia  \citep{liusophia}, a more recent optimizer shown to outperform AdamW. We train a GPT-2 small model over 4 workers, and compare with SlowMo that also uses Sophia base optimizer under communication interval $\tau = 12$, and standalone Sophia with per-iteration mini-batch gradient all-reduce. We use the suggested hyper parameters of Sophia, except tuning its peak LR and accordingly the minimum LR. Similar to previous experiments, we tune momentum coefficient and global LR for SlowMo and global LR for Algorithm \ref{alg:dist-sign-mom}. We report validation losses in Table \ref{tab:sophia}. Although both Algorithm \ref{alg:dist-sign-mom} and SlowMo achieve worse losses than the case of using AdamW as the base optimizer, Algorithm \ref{alg:dist-sign-mom} still improves over SlowMo over 5\%.  

\begin{table}[h]
    \centering
    \renewcommand{\arraystretch}{1.3} 
    \begin{tabular}{|c|c|c|c|}
        \hline
        Alg. & Com. red. & Val. & Improv. \\
        \hline
        Sophia & N.A. & 2.873 &  \\
        \hline
        SlowMo & $12\times$ & 3.016 &  \\
        \hline
        Algorithm 1 & $12\times$ & 2.965 & \textbf{5.23\%} \\
        \hline
    \end{tabular}
    \caption{Performance comparison of SlowMo and Algorithm \ref{alg:dist-sign-mom} with Sophia as the base optimizer.}
    \label{tab:sophia}
\end{table}


\subsection{Ablation studies} 
We conduct several ablation studies to explore the factors contributing to the effectiveness of sign momentum. In all these studies, we used the AdamW optimizer as the base optimizer, as described in the previous section.  

\textit{Lookahead and signed Lookahead optimizer}. We begin by testing whether momentum and signed momentum improve methods with multiple local steps in the case of a single worker. To do this, we configure Algorithm \ref{alg:dist-sign-mom} with $n = 1$, $\beta_1 = \beta_2 = \beta$, and $\lambda = 0$, reducing it to the signed Lookahead optimizer. Upon that, by replacing \eqref{eq:model-update} with $\x_{t + 1, 0} = \x_{t, 0} - \eta \gamma_t \vu_{t + 1}$, we recover the Lookahead optimizer \citep{zhang2019Lookahead}. We test the Lookahead optimizer using base optimizer AdamW in training GPT-2 medium, under $\tau = 48$, global LR = 1, with validation losses reported in Table \ref{tab:lookahead}. We also evaluate the signed Loookahead optimizer with AdamW as the base optimizer in training a GPT-2 small model with $\tau = 24$, global LR = 6, and the validations losses are reported in Table \ref{tab:signed-lookahead}. These results show that both Lookahead and signed Lookahead are effective in improving the performance of base optimizer when $n = 1$, highlighting the critical role of momentum in training Transformers.

\begin{table}[h]
    \centering
    \renewcommand{\arraystretch}{1.3} 
    \begin{tabular}{|c|c|c|c|}
        \hline
        Alg. & $\beta$ & Val. & Improv. \\
        \hline
        AadmW & N.A. & 2.682 &  \\
        \hline
        Lookahead & $0.1$ & 2.665 & \textbf{1.71\%} \\
        \hline
        Lookahead & $0.2$ & 2.660 & \textbf{2.22\%}  \\
        \hline
    \end{tabular}
    \caption{Performance comparison of AdamW and Lookahead with AadmW as the base optimizer.}
    \label{tab:lookahead}
\end{table} 

\begin{table}[h]
    \centering
    \renewcommand{\arraystretch}{1.3} 
    \begin{tabular}{|c|c|c|c|}
        \hline
        Alg. & $\beta$ & Val. & Improv. \\
        \hline
        AadmW &  N.A. & 2.917 &  \\
        \hline
        Signed Lookahead  &  $0.6$ & 2.896 & \textbf{2.12\%} \\
        \hline
        Signed Lookahead  &  $0.8$ & 2.890 & \textbf{2.74\%} \\
        \hline
    \end{tabular}
    \caption{Performance comparison of AdamW and signed Lookahead with AadmW as the base optimizer.}
    \label{tab:signed-lookahead}
\end{table}


\textit{Signed SlowMo}. We now consider the case with multiple workers, i.e., when $n > 1$. In this setting, we define signed SlowMo by setting $\beta_1 = \beta_2 = \beta$, and modifying the updates in \eqref{eq:ut-update} and \eqref{eq:model-update} as follows:
\begin{align*}
    \vu_{t + 1} & = \beta_1 \m_t + \frac{1 - \beta_1}{\gamma_t} \text{sign} (\x_{t, 0} - \x_{t, \tau}), \\ 
\x_{t + 1, 0} & = \x_{t, 0} - \eta \gamma_t \vu_{t + 1}.
\end{align*}
We use signed SlowMo to pre-train the GPT-2 Small model with a communication interval of $\tau = 12$, and report final validation losses in Table \ref{tab:signed-slowmo}. It shows that signed SlowMo achieves better final validation losses compared to SlowMo, showing the efficacy of incorporating sign momentum. However, it still underperforms compared to our proposed Algorithm \ref{alg:dist-sign-mom} (best loss 2.942), which we hypothesize is due to the acceleration effects provided by using $\beta_2 > \beta_1$.

\begin{table}[h]
    \centering
    \renewcommand{\arraystretch}{1.3} 
    \begin{tabular}{|c|c|c|c|}
        \hline
        Alg. & $\beta$ & Val. & Improv. \\
        \hline
        AadmW & N.A. & 2.917 &  \\
        \hline
        SlowMo & & 2.993 &  \\
        \hline
        Signed SlowMo & 0.5   & 2.974 & \textbf{1.92\%} \\
        \hline
        Signed SlowMo & 0.8   & 2.970 & \textbf{2.32\%} \\
        \hline
        Global AdamW & N.A  & 2.992 & \textbf{0.10\%} \\
        \hline
    \end{tabular}
    \caption{Performance comparison of AdamW;  SlowMo, signed SlowMo, and global AdamW with AadmW as the base optimizer.}
    \label{tab:signed-slowmo}
\end{table} 


\textit{Adaptive global update}. We further compare the global sign momentum method with a global step that mimics AdamW (see Appendix for a description). \citet{balles2018dissecting} show that Adam can be regarded as sign momentum with adaptive learning rate depending on relative variance. Its performance, as reported in Table \ref{tab:signed-slowmo}, is only comparable with SlowMo, partially indicating that this adaptivity brings limited benefits when used as the global step of local step methods. We defer its detailed description of this comparisons to the Appendix. 

\section{Conclusion and Discussions}
In this paper, we have introduced a new framework for distributed sign momentum with local steps, integrating sign momentum with a communication-efficient local step approach. The framework is flexible, accommodating a variety of off-the-shelf base optimizers. Comprehensive empirical results from pre-training GPT-2 models of various sizes from scratch consistently show improvements over the baseline method, validating the effectiveness of our approach in scenarios where communicating at every step is impractical. Additionally, a series of ablation studies highlights the impact of sign momentum in different scenarios. We have provided a general theoretical analysis for generic base optimizers, by utilizing a randomized version of the sign operator, and provided an $O(1/\sqrt{T})$ convergence rate when using SGD as the base optimizer. For SGD as base optimizer in particular, under actual sign operator, we have shown an optimal $O(1/T^{1/4})$ convergence that matches the rate of centralized method. To the best of our knowledge, this framework is a novel contribution to the literature.

Future directions include evaluating the performance of the proposed method on training vision models, such as Vision Transformers \citep{dosovitskiy2020image}, and investigating the convergence of Algorithm \ref{alg:dist-sign-mom} under broader parameter settings and with the real sign operator.


\newpage
\newpage
\bibliographystyle{plainnat}
\bibliography{refs}

\appendix
\onecolumn
\section{Optimizer descriptions}
\label{sec:optimizers}
In this section, we provide detailed descriptions of the algorithms referenced in the main text. In the absence of local steps, we present the pseudocode for Polyak’s Momentum (Algorithm \ref{alg:polyak-mom}), AdamW (Algorithm \ref{alg:adamw}), and Lion (Algorithm \ref{alg:lion}). We denote the stochastic gradient computed at \(\x\) using random samples \(\bxi\) as \(\nabla f(\x, \bxi)\), where \(\x\) and \(\bxi\) are defined based on the given context. For distributed algorithms with local steps, we give the pseducode of SlowMo (Algorithm \ref{alg:slowmo}) and Federated Majority Vote signSGD with Stochastic Simple Momentum (Federated MV-sto-signSGD-SIM in Algorithm \ref{alg:fedmv}). 
\begin{figure}[htbp]
    \centering
    \begin{minipage}{0.45\textwidth}
        \begin{algorithm}[H]
    \caption{AdamW, Adam \citep{kingma2014adam} with decoupled weight decay proposed by \citet{loshchilov2017decoupled}. Component-wise vector multiplication $\g_t^2 = \g_t \odot \g_t$, and $\sqrt{\widehat{\bv}_t}$ means component-wise square root.}
    \label{alg:adamw} 
    \begin{algorithmic}
    \REQUIRE Initialization $\x_0$, momentum coefficients $\beta_1, \beta_2$; weight decay $\lambda$;  learning rate $\eta$; $\epsilon = 10^{-8}$ 
    \STATE $\m_0 \gets \zero, \bv_0 \gets \zero$ 
    \FOR{\( t = 0 \) to \( T - 1 \)}
        \STATE Stochastic gradient: $\g_t \gets \nabla f(\x_t, \bxi_t)$
        \STATE Momentum updates: \\ 
        \(\qquad \m_{t + 1} \gets \beta_1 \m_{t} + ( 1- \beta_1) \g_t \) \\
        \( \qquad \bv_{t + 1} \gets \beta_2 \bv_{t} + (1 - \beta_2) \g_t^2  \)
        \STATE Bias-corrections: \\
        \(\qquad \widehat{\m}_{t + 1} \gets \m_{t + 1}/(1 - \beta_1^{t + 1}) \) \\
        \(\qquad \widehat{\bv}_{t + 1} \gets \bv_{t + 1}/(1 - \beta_2^{t +1}) \)
        \STATE Parameter update:  \\ 
        \(\qquad \x_{t + 1} \gets \x_t - \eta \Big( \frac{\widehat{\m}_{t + 1}}{\sqrt{\widehat{\bv}_{t + 1}} + \epsilon} + \lambda \x_{t}  \Big)\)
    \ENDFOR
  \STATE \textbf{return} \( \x_T \)
\end{algorithmic}
\end{algorithm} 
    \end{minipage}
    \hfill
    \begin{minipage}{0.45\textwidth}
    \begin{algorithm}[H]
    \begin{algorithmic}[1]
    \caption{Polyak's Momentum \citep{polyak1964some}.}
    \label{alg:polyak-mom}
    \REQUIRE Initialization $\x_0$, momentum coefficient $\beta$, learning rate $\eta > 0$
    \STATE \( \m_0 \gets \zero \)
    \FOR{$t = 0$ to $T - 1$}{
        \STATE Stochastic gradient: $\g_t \gets \nabla f(\x_t, \bxi_t)$
        \STATE Momentum update: 
        \( \m_{t+1} \gets \beta \m_t + \g_t\)
        \STATE Parameter update: $\x_{t+1} \gets \x_t - \eta \m_{t+1}$\;
    }
    \ENDFOR
    \STATE Return $\x_T$
    \end{algorithmic}
    \end{algorithm}
        \begin{algorithm}[H]
            \caption{Lion \citep{chen2024symbolic}. }
            \label{alg:lion}
            \begin{algorithmic}[1]
                \REQUIRE Initialization $\x_0$; momentum coefficients $\beta_1, \beta_2$; weight decay $\lambda$, learning rate $\eta$
                \STATE $\m_0 \gets \zero$
                \FOR{$t = 0$ to $T - 1$}
                    \STATE Stochastic gradient: \(\g_t \gets \nabla f(\x_t, \bxi_t) \)
                    \STATE Parameter update: \\
                    \( \qquad \vu_{t + 1} \gets \beta_1 \m_t + (1 - \beta_1) \g_t \) \\
                    \(\qquad \x_{t + 1} \gets \x_t - \eta( \text{sign}(\bu_{t + 1}) + \lambda \x_{t} ) \)
                    \STATE Momentum update: \\
                    \(\qquad \m_{t + 1} \gets \beta_2 \m_t + (1 - \beta_2) \g_t \)
                \ENDFOR
                \STATE Return $\x_T$
            \end{algorithmic}
        \end{algorithm}
    \end{minipage}
\end{figure}

\begin{figure}[htbp]
    \centering
    \begin{minipage}{0.45\textwidth}
        \begin{algorithm}[H]
    \caption{SlowMo, a general framework that combines local steps and global momentum step \citep{wang2019slowmo}. Worker $i$'s updating direction of base optimizer at iteration $(t, k)$ is denoted as $\vd_{t, i}^{(i)}$.}  
    \label{alg:slowmo}
    \begin{algorithmic}
    \REQUIRE Initialization $\x_{0, 0}$; base optimizer learning rate $\gamma_t$; inner loop steps $\tau$; global learning rate $\alpha$; initial momentum $\bu_0 = \zero$ 
    \FOR{\( t = 0 \) to \( T - 1 \) at worker $i$ in parallel}
        \FOR{\( k = 0 \) to \( \tau - 1\)}
            \STATE Base optimizer step: \(\x_{t, k + 1}^{(i)}  \gets \x_{t, k}^{(i)} - \gamma_t \vd_{t, k}^{(i)} \)
        \ENDFOR
        \STATE Exact averaging: \(\x_{t, \tau} \gets \frac{1}{n} \sum_{i=1}^n \x_{t, \tau}^{(i)}. \)
        \STATE Update slow momentum: \\
        \(\qquad \bu_{t + 1} \gets \beta \bu_t + \frac{1}{\gamma_t}(\x_{t, 0} - \x_{t, \tau}) \) 
        \STATE Update outer iterate: \(\x_{t + 1, 0} \gets \x_{t, 0} - \alpha \gamma_t \bu_{t + 1} \)
    \ENDFOR
    \STATE \textbf{return} \( \x_{T, 0} \)
    \end{algorithmic}
    \end{algorithm} 
    \end{minipage}
    \hfill
    \begin{minipage}{0.45\textwidth}
    \begin{algorithm}[H]
    \begin{algorithmic}[1]
    \caption{Federated MV-sto-signSGD-SIM \citep{sun2023momentum}. The randomized sign operator $\cS_r$ is as defined in \eqref{eq:rand-sign-1}.}
    \label{alg:fedmv}
    \REQUIRE Iterate initialization $\x_{-1} = \x_0 = \zero$, momentum initialization $\m_0 = \zero$, momentum coefficient $\beta \in [0, 1)$, local learning rate $\gamma$, inner loop steps $\tau$, outer momentum coefficient $\alpha$, global learning rate $\eta$, the uniform $\ell_2$ norm bound on stochastic gradient $B$
    \FOR{$t = 0$ to $T - 1$}{
        \STATE $\y_{t} \gets \x_t + \alpha(\x_t - \x_{t - 1})$
        \FOR{worker \(i = 1\) to \(n\) in parallel}
            \STATE \(\vz_{t, 0}^{(i)} \gets \y_t\)
            \FOR{\(k = 0\) to \(\tau - 1\)}
                \STATE \(\z_{t, k + 1}^{(i)} \gets \z_{t, k}^{(i)}  - \gamma \nabla f_i(\z_{t, k}^{(i)}, \bxi_{t, k}^{(i)})\)
            \ENDFOR
            \STATE \( \y_t^{(i)} \gets \z_{t, \tau}^{(i)}\)
            \STATE \( \m_{t + 1}^{(i)} \gets \beta \m_t^{(i)} + (1 - \beta) \nabla f_i(\y_t^{(i)}, \bxi_t^{(i)})  \)
        \ENDFOR
        \STATE Communicate and majority vote:\\
        \(\qquad \x_{t + 1} \gets \x_t - \eta \text{sign}\big( \sum_{i = 1}^n \cS_r(\m_i^{(t + 1)} \big)\)
    }
    \ENDFOR
    \STATE Return $\x_T$
    \end{algorithmic}
    \end{algorithm} 
    \end{minipage}
\end{figure} 


\section{Additional proofs}
\subsection{Proof of Theorem \ref{thm:rands-main}} 
\label{sec:proof-rand}
\subsubsection{Proof of Lemma \ref{lm:signopr}}
\begin{proof}
For $\cS_r$ defined in \eqref{eq:rand-sign-1} and \eqref{eq:rand-sign-2}, it can be readily verified that if $\| \bv \| \le B$, $\E_{\cS}[\cS(\bv)] = \bv/B$. Then, 
\begin{align*}
    \E_\cS[\| \cS_r(\bv) - \frac{\bv}{B} \|^2 ] = \E_\cS[\| \cS_r(\bv) - \E_\cS[\cS_r(\bv)] \|^2 ] = \E_\cS [\| \cS_r(\bv) \|^2] - \frac{\| \bv \|^2}{B^2} \le d, 
\end{align*}
where the last step follows from that $\cS_r(\bv)$'s component is either 0 or $\pm 1$. 
\end{proof}
\subsubsection{Virtual iterates}
Recall that at iteration $(t, k)$, the local updates are given by $\x_{t, k+1}^{(i)} = \x_{t, k}^{(i)} - \gamma \vd_{t, k}^{(i)}$. We define the global averages: 
\begin{align*}
    \vd_{t, k} := \frac{1}{n}\sum_{i = 1}^{n} \vd_{t, k}^{(i)}, \  \x_{t, k} := \frac{1}{n}\sum_{i = 1}^n \vx_{t, k}^{(i)} . 
\end{align*}
Then, one has
\begin{align*}
    \x_{t, k} = \x_{t,0} - \gamma \sum_{i = 0}^{k - 1} \bd_{t, i}. 
\end{align*}
In the global update step,
\begin{align*}
    \x_{t + 1, 0} = \x_{t, 0} - \eta \gamma \cS_r(\m_{t + 1} ) = \x_{t, 0} - \eta \gamma \cS_r \Big( \beta \m_t + (1 - \beta) \sum_{i = 0}^{\tau - 1} \vd_{t, i} \Big). 
\end{align*}
By Assumption \ref{as:bdd-sg} and Jensen's inequality, we have $\| \vd_{t, i} \| \le R$ almost surely (a.s.), and thus $\| \sum_{i = 0}^{\tau - 1} \vd_{t, i} \| \le \tau R$ a.s.. Further, $\m_{t + 1}$ is a convex combination of $\m_t$ and $\sum_{i = 0}^{\tau - 1} \vd_{t, i}$ with $\m_0 = \zero$, then one can readily show that $\| \m_t \| \le \tau R$ for all $t \ge 0$. Using $\cS_r$ with $B = \tau R$ and taking total expectation, we have 
\begin{align}
\begin{split}
\label{eq:x-update}
    \E[\x_{t + 1, 0}] 
    & = \E[\x_{t, 0}] - \eta \gamma \E[  \cS_r(\m_{t + 1}) ]  \\
    & = \E[\x_{t, 0}] - \frac{\eta \gamma}{\tau R} \E[\m_{t + 1}] \\ 
    & = \E[\x_{t, 0}] - \frac{\eta \gamma}{\tau R}\Big( \beta \E[\m_t] + (1 - \beta) \sum_{i = 0}^{\tau - 1} \E[ \vd_{t, i} ]   \Big)
\end{split}
\end{align}
We define the following auxiliary iterates $\y_{t, k}$ at iteration $(t, k)$ for $k = 0, \ldots, \tau$:
\begin{align}
\label{eq:vir-iter}
    \y_{t, k} := \x_{t, 0} - \frac{\eta \gamma}{\tau R ( 1 - \beta)} \big[ \beta \m_t + ( 1 - \beta ) \sum_{i = 0}^{k - 1} \vd_{t, i}  \big], 
\end{align}
where we use the convention $\sum_{i=0}^{-1} \vd_{t, i} = \zero$. From the definition in \eqref{eq:vir-iter}, we have the following relations for the virtual iterates $\{ \y_{t, k} \}$: 
\begin{align}
    \y_{t, k+ 1} = \y_{t, k} - \frac{\eta \gamma}{\tau R}   \vd_{t, k}, \forall k = 0, \ldots, \tau - 1, \label{eq:ynontau}. 
\end{align}
In addition, under total expectation one has, 
\begin{align}
\label{eq:yt-cancle}
\E \big[ \y_{t, \tau} \big] 
= \E[\x_{t, 0}] - \frac{\eta \gamma}{\tau R(1 - \beta)} \E[\m_{t + 1}] \overset{\eqref{eq:x-update}}{=} \E[\x_{t + 1, 0}] - \frac{\eta \gamma \beta}{\tau R(1 - \beta) } \E[\m_{t + 1}]  = \E[\y_{t + 1, 0}].
\end{align}

\subsubsection{Decent Lemma}
Let $\E_{t, k}$ denote the conditional expectation given history up to time $(t, k)$,
and $\E_{\cS} $ denote the expectation over the random sign operator $\cS_r$. Since each $f_i$ is $L$-smooth, the function $f$ is also $L$-smooth. We have for $k = 0, \ldots, \tau - 1$, 
\begin{align}
\label{eq:descent-lm}
    \E_{t, k}\big[ f(\y_{t, k + 1}) ] - f(\y_{t, k}) 
    & \le - \frac{\eta \gamma }{\tau R}\inpd{\nabla f(\y_{t, k}), \E_{t, k}[\vd_{t, k}]} + \frac{L\eta^2 \gamma^2 }{2 \tau^2 R^2 } \E_{t, k}\big[ \| \vd_{t, k} \|^2]. 
\end{align} 
For the first term on the right hand side, we have
\begin{align}
\begin{split}
    & -\inpd{\nabla f(\y_{t, k}), \E_{t, k}[\vd_{t, k}]}  \\
    = \ & -\inpd{\nabla f(\y_{t, k}) - \nabla f(\x_{t, k}), \E_{t, k}[\bd_{t, k}]} - \inpd{\nabla f(\x_{t, k}), \E_{t, k}[\bd_{t, k} ]} \\
    \le \ & \| \nabla f(\y_{t, k}) - \nabla f(\x_{t, k})\|^2 - \frac{1}{4} \| \E_{t,k }[\bd_{t, k}] \|^2 -\frac{1}{2} \| \nabla f(\x_{t, k} ) \|^2 + \frac{1}{2} \|\nabla f(\x_{t, k}) - \E_{t, k}[\bd_{t, k}] \|^2 \\
    \le \ & L^2 \| \y_{t, k} - \x_{t, k} \|^2 - \frac{1}{4} \| \E_{t,k }[\bd_{t, k}] \|^2 -\frac{1}{2} \| \nabla f(\x_{t, k} ) \|^2 + \frac{1}{2} \|\nabla f(\x_{t, k}) - \E_{t, k}[\bd_{t, k}] \|^2, \label{eq:cute-angle}
\end{split}
\end{align}
where the last inequality also follows from the $L$-smoothness of $f$. We first have
\begin{align}
\begin{split}
\label{eq:yxdiff}
\| \y_{t, k} - \x_{t, k} \|^2 
& = \| \gamma \big( 1 - \frac{\eta}{\tau R} \big) \sum_{i = 0}^{k-1} \bd_{t, i} - \frac{\eta \gamma \beta}{\tau R(1 - \beta)} \m_t \|^2 \\
& \le 2\gamma^2 \big(1 - \frac{\eta}{\tau R}\big)^2 \| \sum_{i=0}^{k-1} \vd_{t, i} \|^2 + \frac{2 \eta^2 \gamma^2 \beta^2}{\tau^2 R^2(1 - \beta)^2} \| \m_t \|^2.
\end{split}
\end{align}
We next bound the second term in \eqref{eq:descent-lm}.  
\begin{align}
\begin{split}
\label{eq:dtkdecom}
 \E_{t, k} [ \| \bd_{t, k} \|^2 ]  = \| \E_{t, k} [ \bd_{t, k} ] \|^2 + \E_{t, k}\big[ \| \bd_{t, k}  - \E_{t,k}[\bd_{t, k}] \|^2 \big]  
\end{split}
\end{align}
Let $\alpha := \frac{\eta \gamma}{\tau R}$. Combing \eqref{eq:descent-lm}-\eqref{eq:dtkdecom} leads to that
\begin{align*}
\E_{t, k}\big[ f(\y_{t, k + 1}) - f(\y_{t, k}) \big] 
& \le -\frac{\alpha}{2} \| \nabla f(\x_{t, k}| \|^2 - \frac{\alpha}{2}\big(\frac{1}{2}-\alpha L\big)\| \E_{t, k} \big[ \vd_{t, k} \big] \|^2 +  \frac{\alpha^2L}{2} \E_{t, k } \big[ \|\vd_{t, k} - \E_{t, k}[ \vd_{t, k} ]\|^2 \big] \\
& \quad + \frac{\alpha}{2} \| \nabla f(\x_{t, k}) - \E_{t, k}[\vd_{t, k}] \|^2  \\
& \quad +2\alpha \gamma^2 L^2\big( 1 - \frac{\eta}{\tau R} \big)^2 \| \sum_{i = 0}^{k - 1} \vd_{t, i} \|^2 + \frac{2\alpha^3 \beta^2 L^2}{(1 - \beta)^2} \| \m_t \|^2. 
\end{align*}
Taking the total expectation, 
\begin{align}
\begin{split}
\label{eq:descent-lm-decomp}
\E[f(\y_{t, k + 1}) - f(\y_{t, k})] 
& \le -\frac{\alpha}{2}\E[ \| \nabla f(\x_{t, k}) \|^2 ] - \frac{\alpha}{2}\big( \frac{1}{2} - \alpha L \big) \E \| \E_{t, k}[\vd_{t, k}] \|^2 + \frac{\alpha^2L}{2} \E \big[ \|\vd_{t, k} - \E_{t, k}[ \vd_{t, k} ]\|^2 \big] \\
& \quad + \frac{\alpha}{2} \E \big[ \| \nabla f(\x_{t, k}) - \E [\vd_{t, k}] \|^2 \big] \\
& \quad + \underbrace{2\alpha \gamma^2 L^2\big( 1 - \frac{\eta}{\tau R} \big)^2 \E \Big[ \| \sum_{i = 0}^{k - 1} \vd_{t, i} \|^2 \Big]}_{D_{t, k}} + \underbrace{\frac{2\alpha^3 \beta^2 L^2}{(1 - \beta)^2} \E \big[  \| \m_t \|^2 \big]}_{M_t}. 
\end{split}
\end{align}
We can now sum up \eqref{eq:descent-lm-decomp} to bound progress within in one round of local updates. Next, we try to bound progress across outer iterations. From the definition of $\y_{t, k}$ in \eqref{eq:vir-iter}, we have
\begin{align*}
    \y_{t + 1, 0} - \y_{t, \tau} = \x_{t + 1, 0} - \x_{t, 0}  + \frac{\eta \gamma}{\tau R} \m_{t + 1} = - \eta \gamma \cS_r(\m_{t + 1}) + \frac{\eta \gamma}{\tau R} \m_{t + 1}.  
\end{align*}
By the $L$-smoothness of $f$, we have
\begin{align}
\label{eq:cross-iter}
    f(\y_{t + 1, 0}) - f(\y_{t, \tau}) \le \inpd{\nabla f(\y_{t, \tau}), \y_{t + 1, 0} - \y_{t, \tau}} + \frac{L \eta^2  \gamma^2 }{2}\|\frac{1}{\tau R} \m_{t + 1} - \cS_r(\m_{ t + 1}) \|^2.  
\end{align}
Taking conditional expectation $\E_{t, \tau}$ (with randomness from $\cS_r$) on \eqref{eq:cross-iter}, and using \eqref{eq:yt-cancle} and Lemma \ref{lm:signopr}, we reach that
\begin{align}
\label{eq:cross-gap} 
    \E_{t, \tau} \big[ f(\y_{t + 1, 0}) - f(\y_{t, \tau}) \big] \le \frac{dL\eta^2 \gamma^2}{2}.
\end{align}
Using \eqref{eq:cross-gap}, we are now prepared to bound the progress across outer iterations. Taking the total expectation and summing \eqref{eq:descent-lm-decomp} and \eqref{eq:cross-gap} from $k = 0$ to $k = \tau - 1$ and from $t = 0$ to $t = T - 1$, we obtain:
\begin{align}
\begin{split}
\label{eq:descent-sum}
    \frac{\E[ f(\y_{T, 0})- f(\y_{0, 0})]}{\tau T}
    & \le -\frac{\alpha }{2\tau T} \sum_{ t= 0}^{T - 1}\sum_{k = 0}^{\tau - 1} \E \| \nabla f(\x_{t, k}) \|^2 - \frac{\alpha}{2\tau T}(\frac{1}{2} - \alpha L) \sum_{t = 0}^{T - 1} \sum_{k = 0}^{\tau - 1} \E [\| \E_{t, k}[\vd_{t, k}] \|^2 ] \\
    & \quad + \frac{1}{2}\alpha^2 \zeta^2 L +  \frac{dL\eta^2 \gamma^2}{2\tau} + \frac{\alpha}{2\tau T} \sum_{t = 0}^{T - 1}\sum_{k = 0}^{\tau - 1} \E [ \| \nabla f(\x_{t, k}) - \E[\vd_{t, k}] \|^2 ] \\
    & \quad + \frac{1}{\tau T} \sum_{t = 0}^{T - 1}\sum_{k = 0}^{\tau - 1} D_{t, k} + \frac{1}{T} \sum_{t = 0}^{T - 1} M_t. 
\end{split}
\end{align}
\subsubsection{Bounding $D_{t, k}$}
First, we have
\begin{align*}
    \| \sum_{i = 0}^{k - 1} \vd_{t, i} \|^2 \le 2 \| \sum_{i=0}^{k-1}(\vd_{t, i} - \E_{t, i}[\vd_{t, i}]) \|^2 + 2k\sum_{i=0}^{k - 1}\| \E_{t, i}[\vd_{t, i}]\|^2. 
\end{align*}
Then, taking total expectation and summing over $k = 0, \ldots, \tau -1 $, from the above relation we have
\begin{align}
\begin{split}
\label{eq:dti-run-sum}
    \sum_{k = 0}^{\tau - 1}\E\big[ \| \sum_{i = 0}^{k - 1} \vd_{t, i} \|^2 \big] 
    & \le 2 \sum_{k = 0}^{\tau - 1}  \E\big[ \| \sum_{i = 0}^{k - 1} ( \vd_{t, i} - \E_{t, i}[\vd_{t, i}]) \|^2 \big] + 2 \sum_{k = 0}^{\tau - 1}  k \sum_{i=0}^{k - 1} \E \big[ \| \E_{t, i}[\vd_{t, i}] \|^2 \big] \\
    & =  2 \sum_{k = 0}^{\tau - 1} \sum_{i= 0}^{k - 1} \E \big[ \| \vd_{t, i} - \E_{t, i}[\vd_{t, i}] \|^2 \big] + 2 \sum_{k = 0}^{\tau - 1}  k \sum_{i=0}^{k - 1} \E \big[ \| \E_{t, i}[\vd_{t, i}] \|^2 \big] \\
    & \le 2\sum_{k = 0}^{\tau - 1} k \zeta^2 + \tau (\tau - 1) \sum_{i = 0}^{k - 1} \E \big[ \| \E_{t, i}[\vd_{t, i}] \|^2 \big] \\
    & \le \tau(\tau - 1) \zeta^2 + \tau (\tau - 1) \sum_{i = 0}^{k - 1} \E \big[ \| \E_{t, i}[\vd_{t, i}] \|^2 \big]
\end{split}
\end{align}
where the first equality follows from the fact that the total expectation of crossing terms of $\| \sum_{i=0}^{k - 1} (\vd_{t, i} - \E_{t, i}[\vd_{t, i} ]) \|^2$ are zeros due to independence. From \eqref{eq:dti-run-sum} we have
\begin{align}
\label{eq:dsum-bd}
\frac{1}{\tau T} \sum_{t = 0}^{T - 1} \sum_{k = 0}^{\tau - 1} D_{t, k} \le 2\alpha \tau (\tau - 1 )\gamma^2 L^2 \big( 1 - \frac{\eta}{\tau R} \big)^2\Big( \frac{\zeta^2}{\tau} + \frac{1}{\tau T} \sum_{t = 0}^{T - 1} \sum_{k = 0}^{\tau - 1} \E[ \| \E_{t, k} [\vd_{t, k}] \|^2 ]\Big). 
\end{align}

\subsubsection{Bounding $M_t$}
By the definition of $\m_t$ and $\m_0 = \zero$, we have
\begin{align}
\begin{split}
\label{eq:mt-bound}
    \| \m_{t} \|^2 
    & = \| \sum_{s = 0}^{t - 1} \beta^{t - 1 - s} (1 - \beta) \Big( \sum_{k = 0}^{\tau - 1} \vd_{s, k} \Big) \|^2\\
    & \le 2(1 - \beta)^2 \| \sum_{s = 0}^{t - 1} \beta^{t - 1 - s}  \Big( \sum_{k = 0}^{\tau - 1} (\vd_{s, k} - \E_{s, k}[ \vd_{s, k} ] ) \Big) \|^2 + 2(1- \beta)^2 \| \sum_{s = 0}^{t - 1} \beta^{t - 1 - s}  \Big( \sum_{k = 0}^{\tau - 1} \E_{s, k} [\vd_{s, k}] \Big) \|^2.
\end{split}
\end{align}
We estimate the right two terms one by one. Taking total expectation and using the same arguments in equality of \eqref{eq:dti-run-sum}, 
\begin{align}
\begin{split}
\label{eq:mt-bound-p1}
& \E \Big[ \| \sum_{s = 0}^{t - 1} \beta^{t - 1 - s}  \Big( \sum_{k = 0}^{\tau - 1} (\vd_{s, k} - \E_{s, k}[ \vd_{s, k} ] ) \Big) \|^2 \Big] \\
= \ & \sum_{s = 0}^{t - 1} \beta^{2(t - 1 -s)} \E \Big[ \| \sum_{k = 0}^{\tau - 1} ( \vd_{s, k} - \E_{s, k}[\vd_{s, k}] \|^2 \Big] \\
= \ &  \sum_{s = 0}^{t - 1} \beta^{2(t - 1 -s)} \sum_{k = 0}^{\tau - 1} \E \big[ \| \vd_{s, k} - \E_{s, k}[\vd_{s, k}] \|^2 \big] \\
\le \ &  \tau \zeta^2 \sum_{s = 0}^{t- 1} \beta^{2(t - 1 -s)} \le \frac{\tau \zeta^2}{1 - \beta^2}.
\end{split}
\end{align}
Similarly, using Jensen's inequality, we have
\begin{align}
\begin{split}
\label{eq:mt-bound-p2}
    & \E \Big[ \| \sum_{s = 0}^{t - 1} \beta^{t - 1 - s}  \Big( \sum_{k = 0}^{\tau - 1} \E_{s, k} [\vd_{s, k}] \Big) \|^2 \Big ] \\ 
    \le \ &  \tau \Big( \sum_{s = 0}^{t - 1} \beta^{t - 1 -s} \Big) \sum_{s = 0}^{t - 1} \beta^{t - 1 - s} \sum_{k = 0}^{\tau - 1} \E \big[ \|  \E_{s, k} \vd_{s, k}\|^2 \big] \\
    \le \ & \frac{\tau}{1 - \beta } \sum_{s = 0}^{t - 1} \beta^{t - 1 - s} \sum_{k = 0}^{\tau - 1} \E \big[ \|  \E_{s, k} \vd_{s, k}\|^2 \big]. 
\end{split}
\end{align}
Combing \eqref{eq:mt-bound}-\eqref{eq:mt-bound-p2} gives that 
\begin{align}
\begin{split}
\label{eq:msum-bd}
\frac{ 1}{T}\sum_{t = 0}^{T - 1} M_t 
& \le \frac{4 \tau \alpha^3 \beta^2 \zeta^2 L^2}{1 - \beta^2} +  \frac{4 \tau \alpha^3 \beta^2 L^2}{1 - \beta}  \frac{1}{T} \sum_{t = 0}^{T - 1} \sum_{s = 0}^{ t  - 1} \beta^{ t- 1 - s}\sum_{k - 0}^{\tau - 1} \E\big[ \| \E_{s, k} [\vd_{s, k}]  \|^2 \big] \\
& = \frac{4 \tau \alpha^3 \beta^2 \zeta^2 L^2}{1 - \beta^2} +  \frac{4 \tau \alpha^3 \beta^2 L^2 }{1 - \beta}  \frac{1}{T} \sum_{t = 0}^{T - 2 }  \Big(  \sum_{s = 0}^{T - 2 - t} \beta^s  \Big)  \Big( \sum_{k = 0}^{\tau - 1} \E \big[ \| \E_{t,k} [\vd_{t, k} ] \|^2 \big] \Big) \\
& \le  \frac{4 \tau \alpha^3 \beta^2 \zeta^2 L^2}{1 - \beta^2} +  \frac{4 \tau \alpha^3 \beta^2 L^2}{(1 - \beta)^2} \frac{1}{T} \sum_{t = 0}^{T - 1} \sum_{k = 0}^{\tau - 1} \E \big[ \| \E_{t,k}[ \vd_{t, k} ] \|^2 \big]. 
\end{split}
\end{align}

\subsubsection{Convergence rate}
Putting \eqref{eq:dsum-bd} and \eqref{eq:msum-bd} back into \eqref{eq:descent-sum}, we obtain that
\begin{align}
\begin{split}
\label{eq:descent-clean}
    \frac{\E[ f(\y_{T, 0})- f(\y_{0, 0})]}{\tau T}
    & \le -\frac{\alpha }{2\tau T} \sum_{ t= 0}^{T - 1}\sum_{k = 0}^{\tau - 1} \E \| \nabla f(\x_{t, k}) \|^2 - \frac{\alpha}{2\tau T}(\frac{1}{2} - \alpha L) \sum_{t = 0}^{T - 1} \sum_{k = 0}^{\tau - 1} \E [\| \E_{t, k}[\vd_{t, k}] \|^2 ] \\
    & \quad + \frac{1}{2}\alpha^2 \zeta^2 L + \frac{dL\eta^2 \gamma^2}{2\tau} +  \frac{\alpha}{2\tau T} \sum_{t = 0}^{T - 1}\sum_{k = 0}^{\tau - 1} \E [ \| \nabla f(\x_{t, k}) - \E[\vd_{t, k}] \|^2 ] \\
    & \quad +2\alpha \tau (\tau - 1 )\gamma^2 L^2 \big( 1 - \frac{\eta}{\tau R} \big)^2\Big( \frac{\zeta^2}{\tau} + \frac{1}{\tau T} \sum_{t = 0}^{T - 1} \sum_{k = 0}^{\tau - 1} \E[ \| \E_{t, k} [\vd_{t, k}] \|^2 ]\Big)  \\ 
    & \quad + \frac{4 \tau \alpha^3 \beta^2 \zeta^2 L^2}{1 - \beta^2} +  \frac{4 \tau \alpha^3 \beta^2 L^2}{(1 - \beta)^2} \frac{1}{T} \sum_{t = 0}^{T - 1} \sum_{k = 0}^{\tau - 1} \E \big[ \| \E_{t,k}[ \vd_{t, k} ] \|^2 \big] \\
    & \le -\frac{\alpha }{2\tau T} \sum_{ t= 0}^{T - 1}\sum_{k = 0}^{\tau - 1} \E \| \nabla f(\x_{t, k} )\|^2 - \frac{\alpha C_1}{\tau T} \sum_{t =0}^{T- 1} \sum_{k = 0}^{\tau- 1}  \E \big[ \| \E_{t, k}[ \vd_{t, k} ] \|^2 \big] \\
    & \quad + \frac{\alpha}{2\tau T} \sum_{t = 0}^{T - 1}\sum_{k = 0}^{\tau - 1} \E [ \| \nabla f(\x_{t, k}) - \E[\vd_{t, k}] \|^2 ] + C_2,
\end{split}
\end{align}
where 
\begin{align*}
    C_1 & =  \frac{1}{4} - \frac{1}{2}\alpha L - 2\tau(\tau - 1) \gamma^2 L^2 \big( 1- \frac{\eta}{\tau R} \big)^2 - \frac{4 \tau^2 \alpha^2 \beta^2
 L^2}{( 1- \beta)^2 } , \\
 C_2 & = \alpha^2 \zeta^2 \Big[ \frac{L}{2} + 2 \alpha (\tau - 1)L^2\big( \frac{\tau R}{\eta} - 1\big)^2 +  \frac{4 \tau \alpha \beta^2 L^2}{1 - \beta^2} \Big] + \frac{d L \eta^2 \gamma^2}{2 \tau}. 
\end{align*}
When we take parameters to ensure that $C_1 \ge  0$, from \eqref{eq:descent-clean}, we have 
\begin{align}
\label{eq:cvg-2prune}
\frac{1}{\tau T} \sum_{ t= 0}^{T - 1}\sum_{k = 0}^{\tau - 1} \E \| \nabla f(\x_{t, k} )\|^2 \le \frac{2\E[ f(\y_{0, 0})- f(\y_{T, 0})]}{\alpha \tau T} + \frac{1}{\tau T} \sum_{t = 0}^{T - 1}\sum_{k = 0}^{\tau - 1} \E [ \| \nabla f(\x_{t, k}) - \E[\vd_{t, k}] \|^2 ] + \frac{2 C_2}{\alpha}. 
\end{align}
Note that $\y_{0, 0} = \x_{0, 0}$ since $\m_0 = \zero$, and we denote $f_* = \inf_{\x \in \R^d} f(\x)$. By taking $\alpha = \sqrt{\frac{n}{\tau T}}$ and recalling that $\alpha := \frac{\eta \gamma}{\tau R}$, we obtain that
\begin{align}
\begin{split}
\label{eq:cvg-rate-rs}
\frac{1}{\tau T} \sum_{ t= 0}^{T - 1}\sum_{k = 0}^{\tau - 1} \E \| \nabla f(\x_{t, k} )\|^2 \le \frac{2(f(\x_{0, 0})- f_*)}{ \sqrt{n  \tau T}} + \frac{1}{\tau T} \sum_{t = 0}^{T - 1}\sum_{k = 0}^{\tau - 1} \E [ \| \nabla f(\x_{t, k}) - \E[\vd_{t, k}] \|^2 ] \\ + \zeta^2 L \sqrt{ \frac{n}{ \tau T }} +  \frac{4n L^2 \zeta^2}{T}\Big[ \big(\frac{\tau R}{\eta} - 1 \big)^2 + \frac{2 \beta^2}{1 - \beta^2} \Big] + d L R^2 \sqrt{ \frac{n \tau}{T} }. 
\end{split} 
\end{align}
Now we return to the constraint of $C_1$ when $\alpha = \sqrt{\frac{n}{\tau T}}$. Using $\alpha = \frac{\eta \gamma}{\tau R}$ in $C_1$ and rearranging terms gives that 
\begin{align*}
     \frac{1}{4} - \frac{1}{2}\sqrt{ \frac{n L^2}{\tau T}} - \frac{nL^2}{\tau T}\Big[ 2 \tau (\tau -1)\big( \frac{\tau R}{\eta} - 1 \big)^2 + \frac{4\tau^2 \beta^2}{(1 - \beta)^2} \Big] \ge 0. 
    \end{align*}
The above relation reduces to 
\begin{align*}
\frac{\tau T}{n L^2} - 2\sqrt{\frac{\tau T}{n L^2}} + 1 & \ge 8\tau(\tau - 1)\big( \frac{\tau R}{\eta} - 1 \big)^2 + \frac{16 \tau^2 \beta^2}{(1 - \beta)^2} + 1, \\
\sqrt{\frac{\tau T}{n L^2}} &  \ge  \Big[  8\tau(\tau - 1)\big( \frac{\tau R}{\eta} - 1 \big)^2 + \frac{16 \tau^2 \beta^2}{(1 - \beta)^2} + 1 \Big]^{1/2}  + 1 . 
\end{align*}
Since
\begin{align*}
    \Big[ \Big[  8\tau(\tau - 1)\big( \frac{\tau R}{\eta} - 1 \big)^2 + \frac{16 \tau^2 \beta^2}{(1 - \beta)^2} + 1 \Big]^{1/2} + 1 \Big]^2 \le 16 \tau(\tau - 1)\big( \frac{\tau R}{\eta} - 1 \big)^2 + \frac{32 \tau^2 \beta^2}{(1 - \beta)^2} + 4,
\end{align*}
it suffices to have
\begin{align*}
    T \ge 4nL^2 \Big[ 4(\tau - 1)\big( \frac{\tau R}{\eta} - 1 \big)^2 + \frac{8\tau \beta^2}{(1 - \beta)^2} + \frac{1}{\tau} \Big]. 
\end{align*}

\subsection{Proof of Theorem \ref{thm:base-sgd}} 
\begin{proof}
Since $\bd_{t, k}^{(i)} = \nabla f_i(\x_{t, k}^{(i)}, \bxi_{t, k}^{(i)})$, we have $\E_{t, k}  [ \vd_{t, k} ] = \frac{1}{n}\sum_{i=1}^n \nabla f_i(\x_{t, k}^{(i)})$. Then,
\begin{align}
\label{eq:drift-bounds}
    \| \nabla f(\x_{t, k}) - \E_{t, k}[\vd_{t, k}] \|^2 = \| \frac{1}{n}\sum_{i = 1}^n \nabla f_i(\x_{t, k}) - \frac{1}{n}\sum_{i = 1}^n \nabla f_i(\x_{t,k}^{(i)}) \|^2  \le \frac{L^2}{n}\sum_{i = 1}^n \| \x_{t, k} - \x_{t, k}^{(i)} \|^2, 
\end{align}
where in the last inequality we use Jensen's inequality and $L$-smoothness of $f_i$'s. The quantity in \eqref{eq:drift-bounds} is a crucial in analyzing local SGD updates \citep{khaled2020tighter}. Since $\x_{t, 0}^{(i)}$ is synchronized for every $t$, the local steps in Algorithm \ref{alg:dist-sign-mom} is equivalent to that in local SGD. We 
can use the existing bound for deviations of local iterates in \citet{wang2019slowmo} (equation (87)) or \cite{yu2019linear} (Lemma 5 with $\beta = 0$) in \eqref{eq:drift-bounds}, then the term of effect of base optimizer in Theorem \ref{thm:rands-main} can be bounded as
\begin{align}
\label{eq:effect-base-localsgd}
    \frac{1}{\tau T} \sum_{t = 0}^{T - 1} \sum_{k = 0}^{\tau - 1} \E \Big[ \| \nabla f(\x_{t, k}) - \E_{t, k}[\vd_{t, k}] \|^2 \Big] \le  \frac{L^2}{n \tau T} \sum_{t = 0}^{T - 1} \sum_{k = 0}^{\tau - 1} \sum_{i = 1}^n \E \big[ \| \x_{t, k} - \x_{t, k}^{(i)} \|^2 \big] \le \frac{2\gamma^2 L^2( \sigma^2 \tau  + 3\delta^2 \tau^2)}{1 - 12 \gamma^2 L^2 \tau^2}. 
\end{align}
When $\gamma L \tau \le 1/6$, we have $1/(1 - 12\gamma^2 L^2 \tau^2) \le 3/2$. Combining this bound with \eqref{eq:effect-base-localsgd}, and using $\gamma = \frac{R}{\eta}\sqrt{\frac{n \tau}{T} }$, the bound in Theorem \ref{thm:rands-main} reduces to 
\begin{align*}
\frac{1}{\tau T} \sum_{ t= 0}^{T - 1}\sum_{k = 0}^{\tau - 1} \E \| \nabla f(\x_{t, k} )\|^2 \le \frac{2(f(\x_{0, 0})- f_*)}{ \sqrt{n  \tau T}} + \frac{3 n \tau^2 L^2 R^2 (\sigma^2 + 3 \tau \delta^2)}{\eta^2 T} \\
+ \sigma^2 L \sqrt{ \frac{n}{ \tau T }}  + \frac{4n L^2 \sigma^2}{T}\Big[ \big(\frac{\tau R}{\eta} - 1 \big)^2 + \frac{2 \beta^2}{1 - \beta^2} \Big] + dLR^2\sqrt{\frac{n \tau}{T}}. 
\end{align*}
When $T \ge 36 n L^2 R^2 \tau^3/\eta^2$, we have $\gamma L \tau \le 1/6$, and thus $\frac{1}{\tau T} \sum_{ t= 0}^{T - 1}\sum_{k = 0}^{\tau - 1} \E \| \nabla f(\x_{t, k} )\|^2 = O(1/\sqrt{T})$.
\end{proof}

\subsection{Proof of Theorem \ref{thm:hard_sign}} 
\subsubsection{Preliminaries}
Since we us SGD as the base optimizer, $\bd_{t, k}^{(i)} = \nabla f_i(\x_{t, k}^{(i)}, \bxi_{t, k}^{(i)})$. We define the following notations: 
\begin{align}
\tnab g(\x_{t, 0}) & = 
\frac{1}{n}\sum_{i=1}^n \sum_{k = 0}^{\tau - 1} \nabla f_i(\x_{t, k}^{(i)}, \bxi_{t, k}^{(i)}),  \\
\tnab f(\x_{t, 0}) & = \frac{1}{n}\sum_{i = 1}^n \sum_{k = 0}^{\tau - 1}  \nabla f_i(\x_{t, k}^{(i)}).
\end{align}
We define the accumulative stochastic gradient noise at $t$-th outer iteration as: 
\begin{align*}
    \bdta_t & = \tnab g(\x_{t, 0}) - \tnab f(\x_{t, 0}).
\end{align*}
We use
\begin{align*}
    \beps_{t + 1} & = \m_{t + 1} - \tau  \nabla f(\x_{t, 0}),
\end{align*}
to denote the gradient estimation error. Then, with $\gamma_t = \gamma$ for all $t$ an $\beta_1 = \beta_2 = \beta$, one has the following outer iteration for $t = 0, \ldots, T - 1$: 
\begin{align}
    \m_{t + 1} & = \beta \m_{t} + (1 - \beta) \tnab g(\x_{t, 0}), \\
    \x_{t + 1, 0} & = \x_{t, 0} - \eta \gamma \sign(\m_{t + 1}). \label{eq:gene-sign-update}
\end{align} 
We initialize
\begin{align*}
    & \m_0 = \zero_d, \\
    & \x_{0, 0} =\x_{-1, 0} - \eta \gamma  \sign(\m_0). 
\end{align*}
Hence $\x_{0, 0} = \x_{-1, 0}$ and $\beps_0 = -\tau \nabla f(\x_{t, 0})$. 

\subsubsection{Descent Lemma}
\begin{lemma}
\label{lm:sign-decent}
Under Assumption \ref{as:smooth}. For the iterates $\{\x_{t, 0}\}$ in Algorithm \ref{alg:dist-sign-mom} with $\beta_1 = \beta_2 = \beta, \lambda = 0$, it satisfies that 
\begin{align}
\label{eq:hs-descent-lm}
    \frac{1}{T}\sum_{t = 0}^{T  - 1} \E \big[ \| \nabla f(\x_{t, 0}) \ \|_1 \big] \le  \frac{f(\x_{0, 0}) - f_*}{\eta \gamma T} + \frac{2 \sqrt{d}}{\tau T} \sum_{t = 0}^{T - 1} \E \big[ \|  \beps_{t + 1} \| \big] + \frac{\eta \gamma d L}{2}.  
\end{align}
\end{lemma}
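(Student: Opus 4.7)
}
The plan is to follow the standard sign-descent style analysis, adapted so that the gradient proxy is the momentum buffer $\m_{t+1}$ rather than a single stochastic gradient. First, I would apply the $L$-smoothness of $f$ (Assumption \ref{as:smooth}) to the outer update \eqref{eq:gene-sign-update}. Since each coordinate of $\text{sign}(\m_{t+1})$ is in $\{-1,0,+1\}$, we have $\|\x_{t+1,0}-\x_{t,0}\|^2 \le \eta^2\gamma^2 d$, so
\begin{align*}
f(\x_{t+1,0}) \le f(\x_{t,0}) - \eta\gamma \langle \nabla f(\x_{t,0}),\, \text{sign}(\m_{t+1})\rangle + \frac{L \eta^2 \gamma^2 d}{2}.
\end{align*}

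Next, I would rewrite the inner product coordinatewise. Decomposing according to whether $\text{sign}(\nabla_j f(\x_{t,0}))$ agrees with $\text{sign}(\m_{t+1,j})$ gives the familiar identity
\begin{align*}
-\langle \nabla f(\x_{t,0}),\text{sign}(\m_{t+1})\rangle
= -\|\nabla f(\x_{t,0})\|_1 + 2\sum_{j=1}^d |\nabla_j f(\x_{t,0})|\,\mathbb{1}\big[\text{sign}(\nabla_j f(\x_{t,0}))\neq \text{sign}(\m_{t+1,j})\big].
\end{align*}
On the disagreement event at coordinate $j$, the definition of $\beps_{t+1} = \m_{t+1} - \tau\nabla f(\x_{t,0})$ forces $|\beps_{t+1,j}| \ge \tau |\nabla_j f(\x_{t,0})|$, so $|\nabla_j f(\x_{t,0})|\,\mathbb{1}[\text{disagree}] \le |\beps_{t+1,j}|/\tau$. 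Summing in $j$ and converting $\ell_1$ to $\ell_2$ with Cauchy--Schwarz yields
\begin{align*}
2\sum_{j} |\nabla_j f(\x_{t,0})|\,\mathbb{1}[\text{disagree}] \le \frac{2}{\tau}\|\beps_{t+1}\|_1 \le \frac{2\sqrt{d}}{\tau}\|\beps_{t+1}\|.
\end{align*}

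Substituting this bound back into the smoothness inequality, rearranging to isolate $\eta\gamma\|\nabla f(\x_{t,0})\|_1$, taking total expectation, telescoping over $t=0,\dots,T-1$, and using $f(\x_{T,0}) \ge f_*$ then gives exactly \eqref{eq:hs-descent-lm} after dividing by $\eta\gamma T$. I do not expect any genuine obstacle here: the argument is a deterministic, per-step descent identity and the only nonroutine ingredient is the disagreement-to-error conversion, which is the standard trick used e.g.\ by \citet{bernstein2018signsgd} and \citet{sun2023momentum}. The real work in proving Theorem \ref{thm:hard_sign} will come later, in controlling $\E[\|\beps_{t+1}\|]$ through its momentum recursion combined with bounds on the local-drift plus stochastic-noise contributions to $\tnab g(\x_{t,0}) - \tau \nabla f(\x_{t,0})$; this descent lemma itself is just the clean first step.
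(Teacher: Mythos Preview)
Your proposal is correct and matches the paper's proof essentially step for step: smoothness applied to \eqref{eq:gene-sign-update}, the coordinatewise sign-disagreement decomposition, the observation that disagreement at coordinate $j$ forces $|[\beps_{t+1}]_j|\ge \tau|[\nabla f(\x_{t,0})]_j|$, the $\ell_1\to\ell_2$ conversion via $\sqrt{d}$, and the telescoping. One cosmetic remark: what you call an ``identity'' is in general only the inequality $\le$ (when $\m_{t+1,j}=0$ but $\nabla_j f\ne 0$ the left side contributes $0$ while your right side contributes $|\nabla_j f|$), but since you only need the upper bound this does not affect the argument.
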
 
\begin{proof}
The proof of this Lemma is adapted from \cite{sun2023momentum}. From Assumption \ref{as:smooth} we have
\begin{align*}
    f(\x_{t + 1, 0}) 
    & \le f(\x_{t, 0}) + \nabla f(\x_{t, 0})^\top (\x_{t +1, 0} - \x_{t, 0}) + \frac{L}{2}\| \x_{t + 1, 0} - \x_{t, 0} \|^2 \\
    & \underset{\eqref{eq:gene-sign-update}}{\le} f(\x_{t, 0}) - \eta \gamma \nabla f(\x_{t, 0})^\top \sign(\m_{t + 1} ) + \frac{\eta^2 \gamma^2 dL}{2} \\
    & \le f(\x_{t, 0} ) - \eta \gamma \nabla f(\x_{t, 0})^\top \sign(\nabla f(\x_{t, 0})) + \eta \gamma \nabla f(\x_{t, 0})^\top( \sign(\nabla f(\x_{t, 0}) - \sign(\m_{t + 1})) + \frac{\eta^2 \gamma^2 d L}{2} \\
    & = f(\x_{t, 0}) - \eta \gamma \| \nabla f(\x_{t, 0}) \|_1 + \eta \gamma \sum_{r=1}^d [\nabla f(\x_{t, 0})]_r \cdot [\sign([\nabla f(\x_{t, 0})]_r) - \sign([\m_{t + 1}]_r)] + \frac{\eta^2 \gamma^2 d L}{2}.
\end{align*}
Recall that $\beps_{t + 1} = \m_{t + 1} - \tau \nabla f(\x_{t, 0})$. If $\sign([\nabla f(\x_{t,0})]_r) \ne \sign ([\m_{t + 1}]_r)$, then 
\begin{align*}
    | [\beps_{ t + 1}]_r | = |[\m_{t + 1}]_r - \tau [\nabla f(\x_{t})]_r | \ge \tau | [\nabla f(\x_{t, 0})]_r |,  
\end{align*}
and thus
\begin{align*}
    [\nabla f(\x_{t, 0})]_r \cdot [\sign([\nabla f(\x_{t, 0})]_r) - \sign([\m_{t + 1}]_r)] \le \frac{2 | [\beps_{t + 1}]_r |}{\tau}.
\end{align*}
It follows that
\begin{align*}
    f(\x_{t + 1, 0}) \le f(\x_{t, 0}) - \eta \gamma \| \nabla f(\x_{t, 0}) \|_1 + \frac{2 \eta \gamma \sqrt{d} \| \beps_{t + 1} \|}{\tau} + \frac{\eta^2 \gamma^2 d L}{2}.  
\end{align*}
Summing over $t = 0, \ldots, T-1$ and taking expectation, we obtain the desired relation in \eqref{eq:hs-descent-lm}. 
\end{proof}

\subsubsection{Bounding estimation errors}
We next try to bound the estimation error $\E[\| \beps_{t + 1} \|]$. 
By the recursion of $\m_t$ and the definition of $\bdta_t$, for any $0 \le t \le T - 1$, 
\begin{align*} 
    \m_{t + 1} 
    & = \beta \m_t + (1 - \beta) \tnab g(\x_{t, 0}) \\
    & = \beta (\beps_t + \tau \nabla f(\x_{t - 1, 0})) + (1 - \beta)(\bdta_t + \tau \nabla f(\x_{t, 0})) + (1 - \beta)(\tnab f(\x_{t, 0}) - \tau \nabla f(\x_{t, 0}))
\end{align*}
Then, subtracting $\tau \nabla f(\x_{t, 0})$ from both sides of the above relation gives that
\begin{align*}
    \beps_{t + 1} 
    & = \beta \beps_t + \beta \tau (\nabla f(\x_{t - 1, 0}) - \nabla f(\x_{t, 0})) + (1 - \beta) \bdta_t + (1 - \beta)(\tnab f(\x_{t, 0}) - \tau \nabla f(\x_{t, 0})) \\
    & = \beta^{t + 1} \beps_0 + \tau \sum_{s=0}^t \beta^{t + 1 - s } (\nabla f(\x_{s - 1, 0}) - \nabla f(\x_{s, 0})) + (1-\beta)\sum_{s =0}^t \beta^{t - s} \bdta_s \\
    & \quad  + (1 - \beta) \sum_{s = 0}^t \beta^{t - s}( \tnab f(\x_{s, 0}) - \tau \nabla f(\x_{s, 0}) ). 
\end{align*} 
Taking Euclidean norm and total expectation on the above relation leads to
\begin{align}
\label{eq:bounding_et}
\begin{split}
    \E [ \| \beps_{t + 1} \| ] 
    & \le \beta^{t + 1}  \E [\| \beps_0 \|] + \underbrace{\tau \sum_{s=0}^t \beta^{t + 1 - s } \E \big[ \| \nabla f(\x_{s - 1, 0}) - \nabla f(\x_{s, 0})\| \big]}_{E_1} + \underbrace{(1-\beta) \E \big[ \| \sum_{s = 0}^t \beta^{t - s} \bdta_s \| \big]}_{E_2}  \\
    & \quad  + \underbrace{(1 - \beta) \E \big[ \|  \sum_{s = 0}^t \beta^{t - s}( \tnab f(\x_{s, 0}) - \tau \nabla f(\x_{s, 0}) ) \| \big]}_{E_3}.
\end{split}
\end{align}
We next bound $E_1, E_2, E_3$ one by one. 

\textbf{Bounding $E_1$}. From $L$-smoothness of $f$, we have, for all $s = 0, \ldots, t$, 
\begin{align*}
\| \nabla f(\x_{s - 1, 0}) - \nabla f(\x_{s ,0}) \| \le L \|\x_{s - 1, 0} - \x_{s, 0} \| \le \eta \gamma \sqrt{d}L.
\end{align*}
Thus, 
\begin{align}
\label{eq:e1_bd}
    E_1 \le \tau \eta \gamma \sqrt{d} L \sum_{s = 0}^t \beta^{t + 1 - s} \le \frac{\tau \eta \gamma \sqrt{d} L}{1 - \beta}.
\end{align}

\textbf{Bounding $E_2$}. By definition, 
\begin{align}
    E_2 & = (1 - \beta)  \E \big[ \| \sum_{s = 0}^t \beta^{t - s} \bdta_s \| \big] \notag \\ 
    & \le (1 - \beta) \Big( \E [ \| \sum_{s= 0}^t \beta^{t - s} \bdta_s \|^2 ] \Big)^{1/2} \notag \\
    & \le (1 - \beta) \Big( \E [ \| \sum_{s = 0}^t \beta^{t - s} ( \frac{1}{n}\sum_{i=1}^n \sum_{k = 0}^{\tau - 1} \nabla f_i(\x_{s, k}^{(i)}, \bxi_{s, k}^{(i)}) - \frac{1}{n}\sum_{i = 1}^n \sum_{k = 0}^{\tau - 1}  \nabla f_i(\x_{s, k }^{(i)}) ) \|^2 ] \Big)^{1/2} \notag \\
    & \le (1 - \beta) \Big( \frac{1}{n^2}  \sum_{s = 0}^t \beta^{2t - 2s} ( \sum_{i=1}^n \sum_{k = 0}^{\tau - 1} \E [ \| \nabla f_i(\x_{s, k}^{(i)}, \bxi_{s, k}^{(i)}) -  \nabla f_i(\x_{s, k }^{(i)}) ) \|^2 ] \Big)^{1/2} \label{eq:tower_exp} \\
    & \le \sqrt{\frac{\tau (1 - \beta) }{n}} \sigma, \label{eq:beta_ineq}
 \end{align}
where we used the mutual independence of stochastic gradient noise over workers, and the low of total expectations in \eqref{eq:tower_exp}, and used $\sqrt{1 - \beta^2} > \sqrt{1 - \beta}$ in \eqref{eq:beta_ineq}. 

\textbf{Bounding $E_3$}. By Assumption \ref{as:smooth} and \ref{as:bdd-sg}, we have 
\begin{align}
\label{eq:e3_bound}
\begin{split}
    E_3 & = (1 - \beta)  \E \big[ \|  \sum_{s = 0}^t \beta^{t - s}\big[ \frac{1}{n}\sum_{i = 1}^n \sum_{k = 0}^{\tau - 1}\big( \nabla f_i(\x_{s,k}^{(i)}) -\nabla f_i(\x_{s, 0}) \big) \big] \| \big] \\
    & \le (1 - \beta) \frac{L}{n} \sum_{s= 0}^t \beta^{t - s} \sum_{i = 1}^n \sum_{k = 0}^{\tau - 1} \E [ \| \x_{s, k}^{(i)} - \x_{s, 0} \| ] \\
    & \le \frac{1}{2}(1 - \beta) \tau (\tau - 1) \eta L R \sum_{s = 0}^t \beta^{t - s} \\
    & \le \frac{1}{2}\tau^2 \eta L R, 
\end{split}
\end{align}
where we used $\|\x_{s, k}^{(i)} - \x_{s, 0}\| \le \eta k R$ from the boundedness of $\| \nabla f_i(\x_{s, k}^{(i)}, \bxi_{s, k}^{(i)})\| \le R$. In light of \eqref{eq:e1_bd}\eqref{eq:beta_ineq}\eqref{eq:e3_bound}, the relation \eqref{eq:bounding_et} leads to
\begin{align}
\label{eq:et_finalbd}
    \E[ \| \beps_{t + 1} \|] \le \tau\beta^{t +1 } \| \nabla f(\x_{0, 0}) \| + \frac{\tau \eta \gamma \sqrt{d} L}{1 - \beta} + \sigma \sqrt{\frac{\tau (1  - \beta)}{n}} + \frac{1}{2}\tau^2 \eta L R.
\end{align}

\subsubsection{Proof of Theorem \ref{thm:hard_sign}}
We are now in a position  to show Theorem \ref{thm:hard_sign}.
\begin{proof}
    Using Lemma \ref{lm:sign-decent} with bound of $\E[ \| \beps_{t +1} \|]$ in \eqref{eq:et_finalbd}, we obtain
    \begin{align*}
        \frac{1}{T}\sum_{t = 0}^{T  - 1} \E \big[ \| \nabla f(\x_{t, 0}) \ \|_1 \big] \le  
        \frac{f(\x_{0, 0}) - f_*}{\eta \gamma T} + \frac{2\sqrt{d}}{T(1- \beta)} \|\nabla f(\x_{0, 0}) \| + \frac{2dL\eta \gamma}{1 - \beta}  + 2\sigma \sqrt{\frac{(1 - \beta) d}{\tau n} } + \eta L  \big( \tau R \sqrt{d} + \frac{\gamma d}{2} \big).
    \end{align*} 
Taking $\eta = \frac{1}{L T^{3/4}}$, and $1 - \beta = \frac{1}{\sqrt{T}}$, it holds that
\begin{align*}
     \frac{1}{T}\sum_{t = 0}^{T  - 1} \E \big[ \| \nabla f(\x_{t, 0}) \ \|_1 \big] \le \frac{L(f(\x_{0, 0}) - f_*)}{\gamma  T^{1/4}} + \frac{2\sqrt{d}}{T^{1/2}} \|\nabla f(\x_{0, 0}) \| + \frac{2d \gamma}{T^{1/4}}  + \frac{2\sigma}{T^{1/4}} \sqrt{\frac{d}{\tau n} } +  \frac{\sqrt{d} \tau R + \gamma d / 2}{T^{3/4}}.
\end{align*}
\end{proof}

\section{Additional experiment details}
\subsection{Implementations}
Our implementations\footnote{https://github.com/shuhuayu/dist-sign-momentum} build on codebases nanoGPT\footnote{https://github.com/karpathy/nanoGPT}, FairScale\footnote{https://github.com/facebookresearch/fairscale}, and Sophia\footnote{https://github.com/Liuhong99/Sophia},  where we use nanoGPT for GPT-2 \citep{radford2019language} modeling, Sophia for the optimizer Sophia, and FairScale for distributed training modules. Our algorithm is implemented in the FairScale distributed training modules. 


\subsection{Additional experiment results}

\textit{Comparison with local AdamW.} In our experiments, we omit comparisons with Local AdamW, which uses AdamW as the base optimizer for local steps and periodically averages parameters across workers. In Figure \ref{fig:compare-local}, we compare the performance of Local AdamW with SlowMo and our proposed method under communication intervals of $\tau = 12$ and $\tau = 24$. The results show that Local AdamW is significantly slower than both SlowMo and our method, consistent with the findings of \citet{sun2024co2}.
    \begin{figure*}[htbp]
    \centering
    \begin{minipage}{0.32\textwidth}
        \centering
        \includegraphics[width=\linewidth]{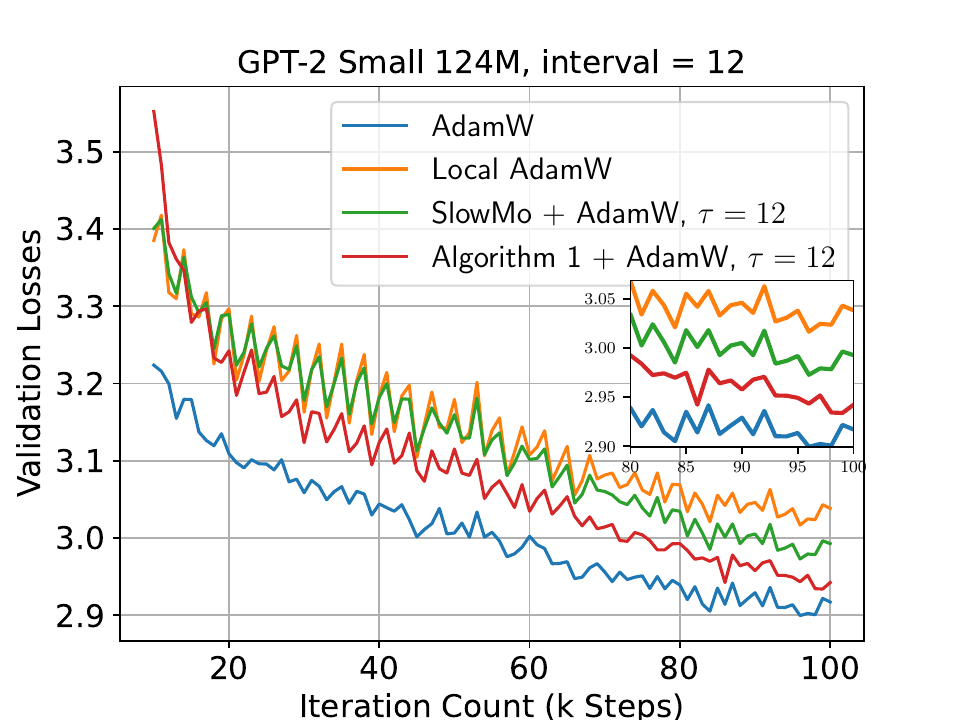}
    \end{minipage}
    \begin{minipage}{0.32\textwidth}
        \centering
        \includegraphics[width=\linewidth]{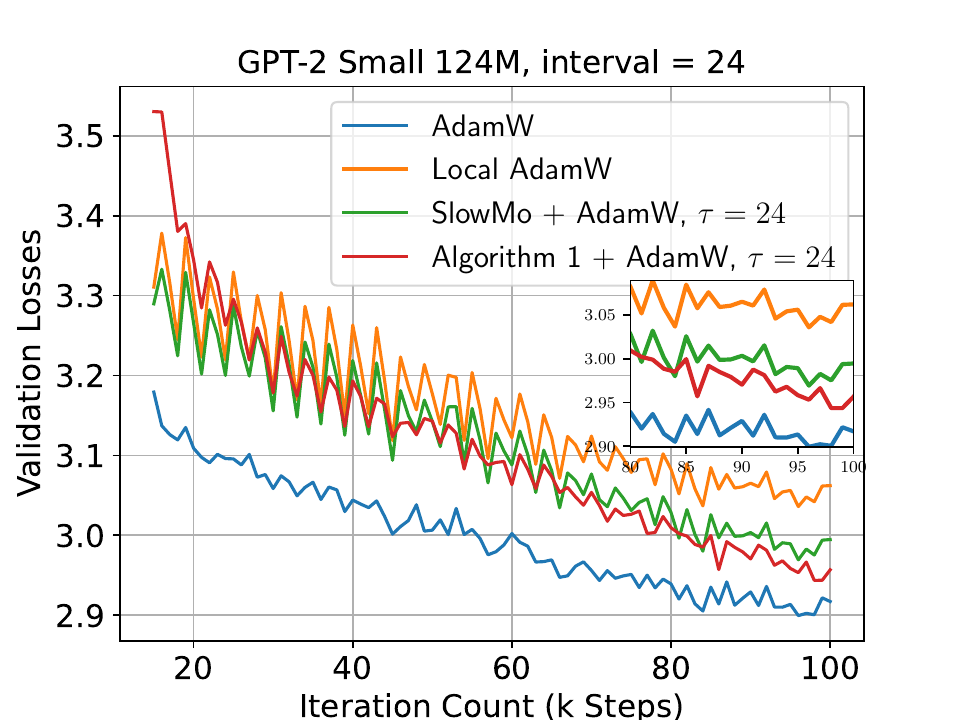}
    \end{minipage}
    \caption{Validation loss curves for communication interval $\tau = 12, 24.$}
    \label{fig:compare-local}
\end{figure*}

\textit{Comparisons of optimization errors}. In addition to the improved generalization performance in terms of validation loss, we evaluate the optimization errors of our proposed method by plotting the training losses in Figure \ref{fig:compare-train-12} for a communication interval of $\tau = 12$. The results in Figure \ref{fig:compare-train-12} demonstrate that our method consistently outperforms SlowMo in terms of optimization errors. 
\begin{figure*}[htbp]
    \centering
    \begin{minipage}{0.32\textwidth}
        \centering
        \includegraphics[width=\linewidth]{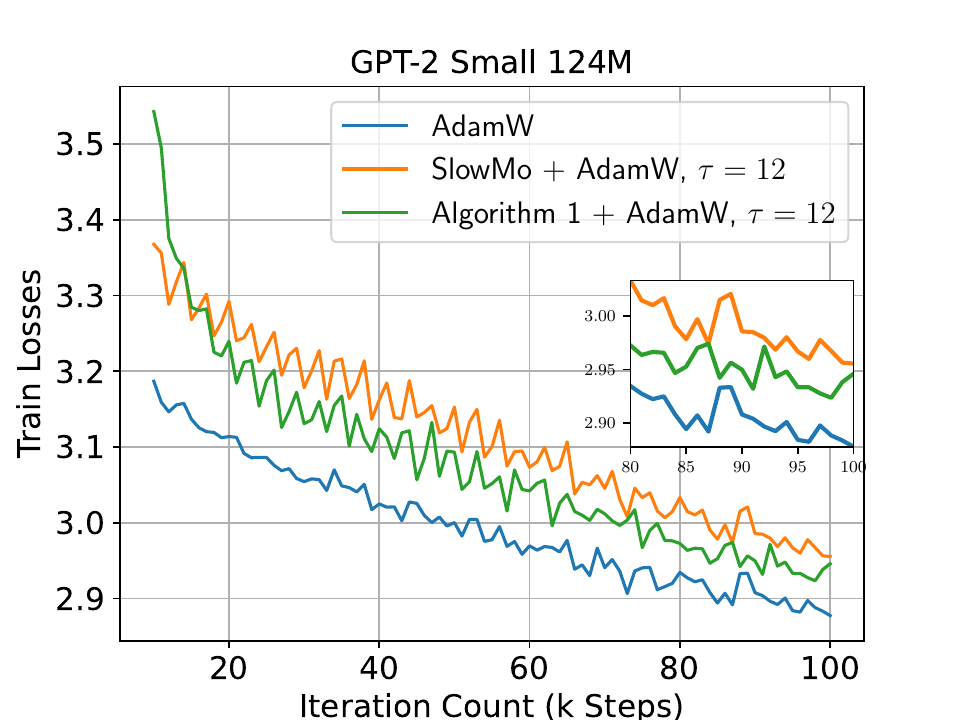}
    \end{minipage}\hfill
    \begin{minipage}{0.32\textwidth}
        \centering
        \includegraphics[width=\linewidth]{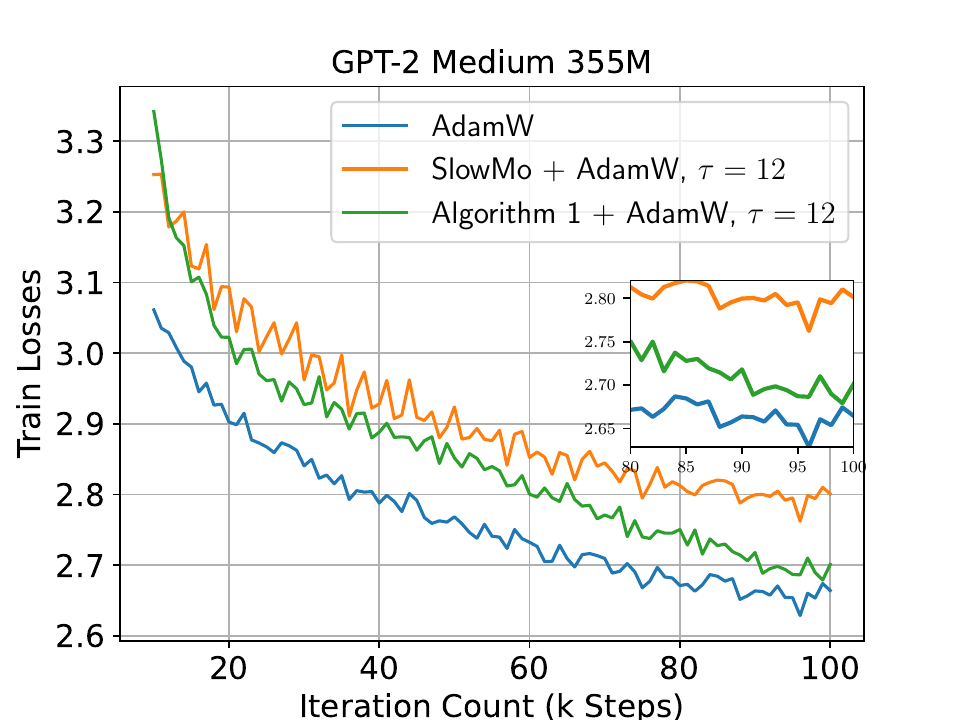}
    \end{minipage}\hfill
    \begin{minipage}{0.32\textwidth}
        \centering
        \includegraphics[width=\linewidth]{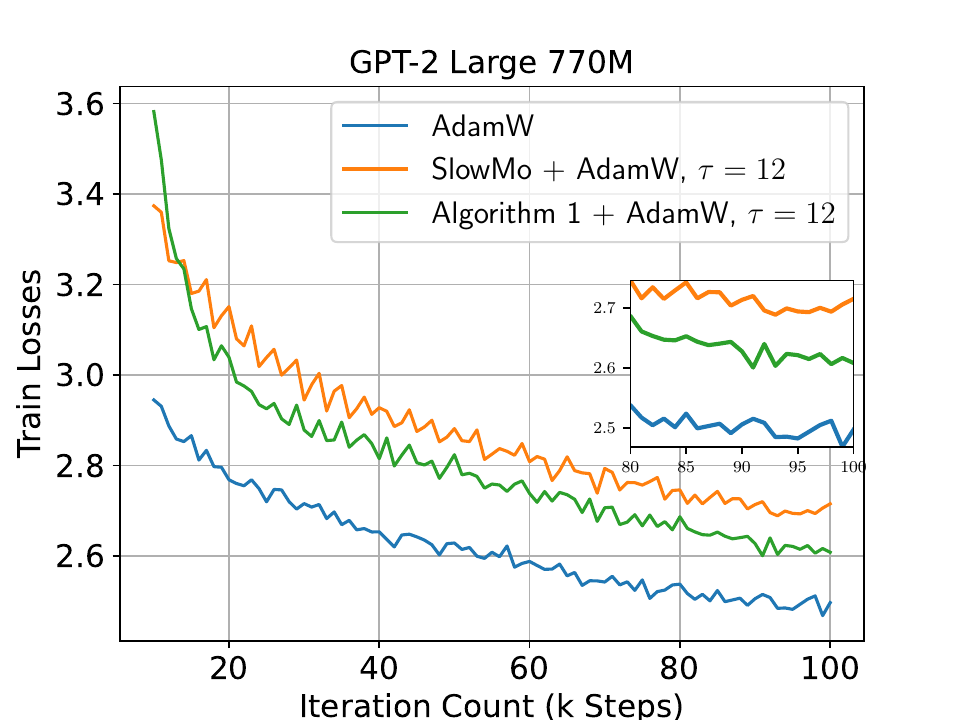}
    \end{minipage}
    \caption{Training loss curves for communication interval $\tau = 12$.}
    \label{fig:compare-train-12}
\end{figure*} 

\textit{Additional validation loss curves.} We plot the validation loss curves in Figures \ref{fig:val-24} for communication intervals $\tau = 24$.  
\begin{figure*}[htbp]
    \centering
    \begin{minipage}{0.32\textwidth}
        \centering
        \includegraphics[width=\linewidth]{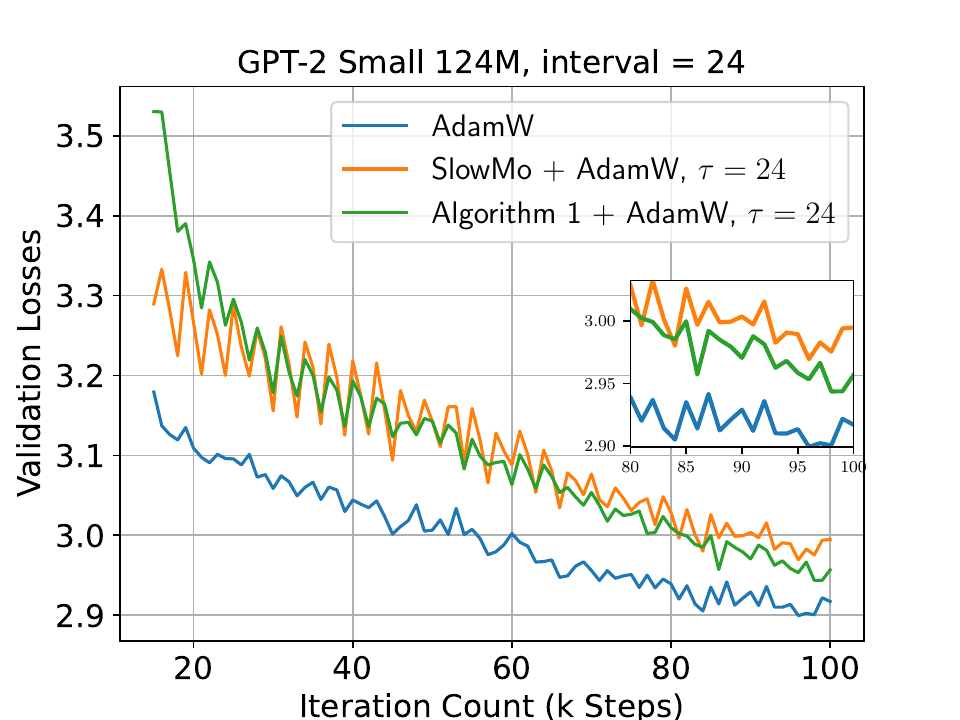}
    \end{minipage}\hfill
    \begin{minipage}{0.32\textwidth}
        \centering
        \includegraphics[width=\linewidth]{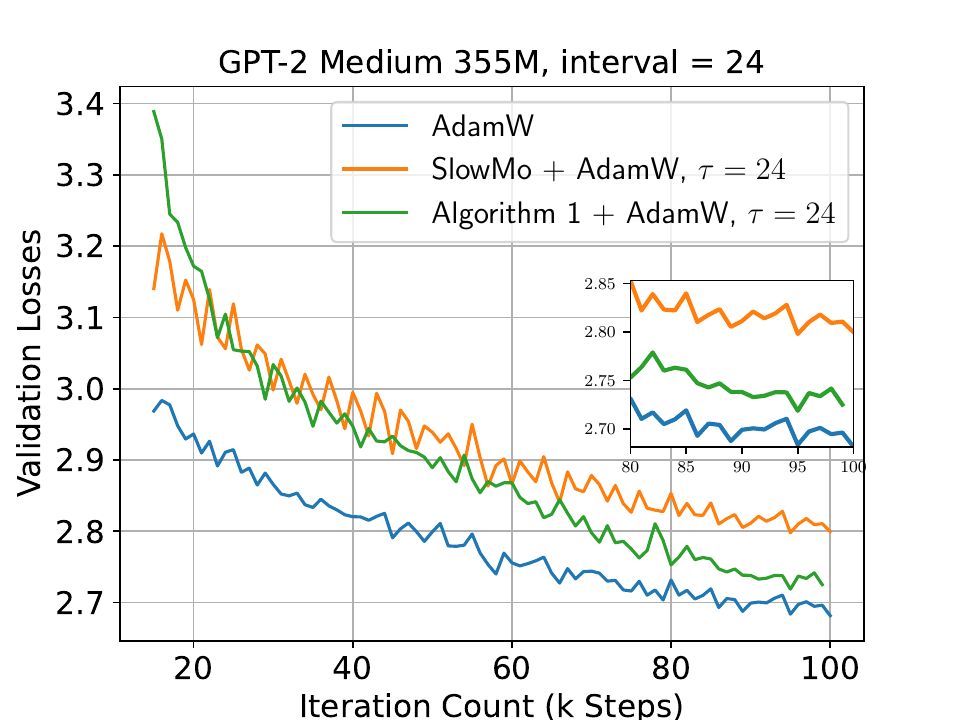}
    \end{minipage}\hfill
    \begin{minipage}{0.32\textwidth}
        \centering
        \includegraphics[width=\linewidth]{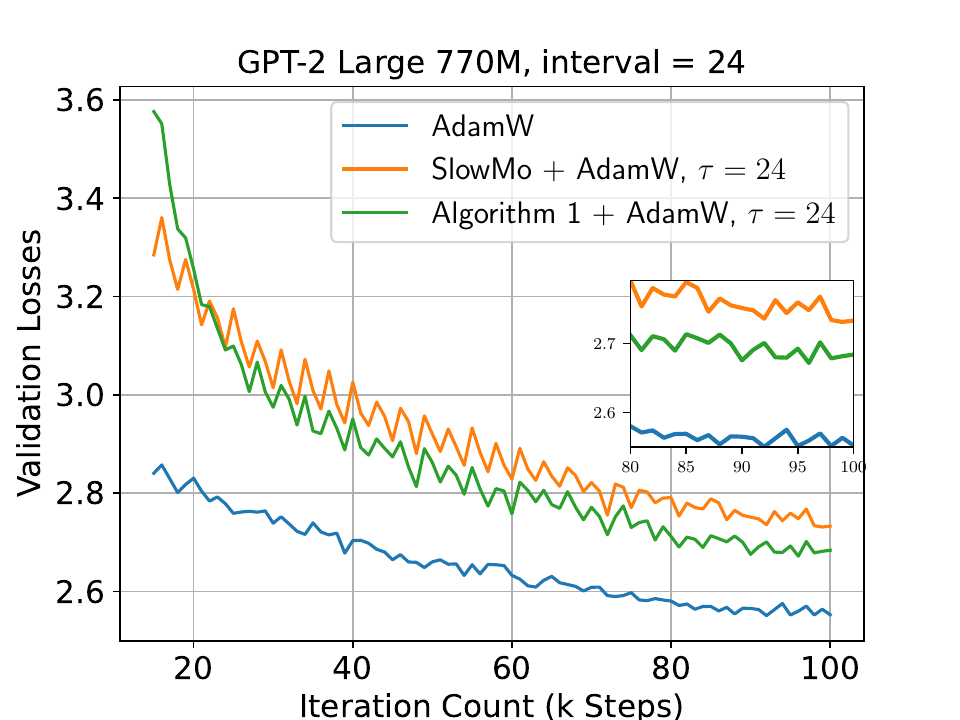}
    \end{minipage}
    \caption{Validation loss curves for communication interval $\tau = 24$.}
    \label{fig:val-24}
\end{figure*} 

\textit{Details of global AdamW step}. The detailed description of a global update that mimics AdamW is presented in Algorithm \ref{alg:global-adamw}. 

\begin{algorithm}[ht]
\caption{Global AdamW with local steps. Component-wise vector multiplication $\g_t^2 = \g_t \odot \g_t$, and $\sqrt{\widehat{\bv}_t}$ means component-wise square root.}
\label{alg:global-adamw}
\begin{algorithmic}[1]
    \REQUIRE Initialization $\x_{0, 0}$; local update directions $\vd_{t, k}^{(i)}$;  local learning rate $\gamma$; global learning rate $\eta$; momentum coefficients $\beta_1, \beta_2$; $\epsilon = 10^{-8}$.   
    \STATE \( \m_t \gets \zero, \bv_t \gets \zero \)
    \FOR{\( t = 0 \) to \( T - 1 \)}
        \FOR{\(i = 1 \) to \( n \) in parallel}
            \FOR{ \( k = 0 \) to \( \tau - 1\)}
            \STATE \( \x^{(i)}_{t, k+1} \gets \x^{(i)}_{t, k} - \gamma_t \vd_{t, k}^{(i)} \)
        \ENDFOR
        \ENDFOR
        \STATE All-reduce step: $\x_{t, \tau} = \frac{1}{n} \sum_{i=1}^n \x^{(i)}_{t, \tau}$
        \STATE Pseudo-gradient: \( \g_t \gets \frac{1}{\gamma_t}(\x_{t, 0} - \x_{t, \tau})\)
        \STATE Momentum updates: \\ 
        \(\qquad \m_{t + 1} \gets \beta_1 \m_{t} + ( 1- \beta_1) \g_t \) \\
        \( \qquad \bv_{t + 1} \gets \beta_2 \bv_{t} + (1 - \beta_2) \g_t^2  \)
        \STATE Bias-corrections: \\
        \(\qquad \widehat{\m}_{t + 1} \gets \m_{t + 1}/(1 - \beta_1^{t + 1}) \) \\
        \(\qquad \widehat{\bv}_{t + 1} \gets \bv_{t + 1}/(1 - \beta_2^{t +1}) \)
        \STATE Parameter update:  \\ 
        \(\qquad \x_{t + 1, 0} \gets \x_{t, 0} - \eta \Big( \frac{\widehat{\m}_{t + 1}}{\sqrt{\widehat{\bv}_{t + 1}} + \epsilon} + \lambda \x_{t, 0}  \Big)\)
        \STATE Synchronize over all workers: \\
        \( \qquad \forall i \in [n], \ \x^{(i)}_{t + 1, 0} \gets \x_{t+ 1, 0} \)
    \ENDFOR
  \STATE \textbf{return} \( \x_{T, 0} \)
\end{algorithmic}
\end{algorithm} 

\textit{On hyper-parameter fine-tuning}. For AdamW, we fine-tune the peak LR and use the recommended values for $(\beta_1, \beta_2)$ and weight decay $\lambda$ from \citet{liusophia}. Our results are consistent with those reported in \citet{liusophia}. The same $(\beta_1, \beta_2)$ and $\lambda$ values are used for the base optimizer AdamW in both SlowMo and our proposed method. For SlowMo, we fine-tune the local peak LR, global LR, and momentum coefficients. In our proposed method, we adopt the values for $(\beta_1, \beta_2)$ and $\lambda$ suggested for Lion in \citet{liusophia}, while fine-tuning only the global LR. Note that our proposed method is more sensitive to communication intervals due to its signed update nature, whereas in SlowMo, the momentum buffer can adapt to the accumulation of gradients from any number of local steps $\tau$, making it less sensitive to communication intervals.

\end{document}